\colorlet{pink}{red!40}
\colorlet{lightblue}{blue!30}
\colorlet{lightgreen}{green!30}
\renewcommand{\cite}[1]{\citep{#1}}
\def\db{{D}}
\def\agd{{\textnormal{\texttt{AGD}}}}
\def\abm{{\textnormal{\texttt{ABM}}}}
\def\agdavi{{\textnormal{\texttt{AGDAVI}}}}
\def\avi{{\textnormal{\texttt{AVI}}}}
\def\cgavi{{\textnormal{\texttt{CGAVI}}}}
\def\cg{{\textnormal{\texttt{CG}}}}
\def\ihb{{\textnormal{\texttt{IHB}}}}
\def\ihb{{\textnormal{\texttt{IHB}}}}
\def\oavi{{\textnormal{\texttt{OAVI}}}}
\def\vca{{\textnormal{\texttt{VCA}}}}
\def\svd{{\textnormal{\texttt{SVD}}}}
\def\svm{{\textnormal{\texttt{SVM}}}}
\def\cvxoracle{{\textnormal{\texttt{ORACLE}}}}
\def\wihb{{\textnormal{\texttt{WIHB}}}}
\def\pcg{{\textnormal{\texttt{PCG}}}}
\def\bpcg{{\textnormal{\texttt{BPCG}}}}
\def\aa{{\mathbf{a}}}
\def\bb{{\mathbf{b}}}
\def\cc{{\mathbf{c}}}
\def\dd{{\mathbf{d}}}
\def\ee{{\mathbf{e}}}
\def\nn{{\mathbf{n}}}
\def\qq{{\mathbf{q}}}
\def\uu{{\mathbf{u}}}
\def\vv{{\mathbf{v}}}
\def\ww{{\mathbf{w}}}
\def\xx{{\mathbf{x}}}
\def\yy{{\mathbf{y}}}
\def\zz{{\mathbf{z}}}
\newcommand{\N}{\mathbb{N}}
\renewcommand{\P}{\mathbb{P}}
\newcommand{\R}{\mathbb{R}}
\newcommand\cD{{\ensuremath{\mathcal{D}}}\xspace}
\newcommand\cG{{\ensuremath{\mathcal{G}}}\xspace}
\newcommand\cI{{\ensuremath{\mathcal{I}}}\xspace}
\newcommand\cO{{\ensuremath{\mathcal{O}}}\xspace}
\newcommand\cP{{\ensuremath{\mathcal{P}}}\xspace}
\newcommand\cT{{\ensuremath{\mathcal{T}}}\xspace}
\newcommand\cX{{\ensuremath{\mathcal{X}}}\xspace}
\newcommand\cY{{\ensuremath{\mathcal{Y}}}\xspace}
\newcommand{\eps}{\epsilon}
\newcommand{\sig}{\sigma}
\DeclareMathOperator{\vertices}{vert}
\DeclareMathOperator{\dist}{dist}
\DeclareMathOperator{\faces}{faces}
\DeclareMathOperator{\argmax}{argmax}
\DeclareMathOperator{\argmin}{argmin}
\DeclareMathOperator{\conv}{conv}
\DeclareMathOperator{\lt}{lt}
\DeclareMathOperator{\ltc}{ltc}
\DeclareMathOperator{\mse}{mse}
\DeclareMathOperator{\spar}{spar}
\newcommand{\zeros}{\ensuremath{\mathbf{0}}}
\newcommand{\oneterm}{\ensuremath{\mathbb{1}}}
\newcommand*{\vsepfbox}[1]{%
  \begingroup
    \sbox0{\fbox{#1}}%
    \setlength{\fboxrule}{0pt}%
    \mbox{\kern-\fboxsep\fbox{\unhbox0}\kern-\fboxsep}%
  \endgroup
}
\theoremstyle{plain} \numberwithin{equation}{section}
\newtheorem{theorem}{Theorem}[section]
\numberwithin{theorem}{section}
\newtheorem{lemma}[theorem]{Lemma}
\newtheorem{corollary}[theorem]{Corollary}
\newtheorem{problem}[theorem]{Problem}
\theoremstyle{definition}
\newtheorem{definition}[theorem]{Definition}
\newtheorem{remark}[theorem]{Remark}
\theoremstyle{plain}
\def\1{\bm{1}}
\def\mathcolor#1#{\@mathcolor{#1}}
\def\@mathcolor#1#2#3{%
  \protect\leavevmode
  \begingroup
    \color#1{#2}#3%
  \endgroup
}
\newcommand{\hrulealg}[0]{\vspace{1mm} \hrule \vspace{1mm}}
\begin{document}

\title{Approximate Vanishing Ideal Computations at Scale}

\author{\name Elias Wirth \email          \texttt{\href{mailto:wirth@math.tu-berlin.de}{wirth@math.tu-berlin.de}}\\
       \addr Institute of Mathematics\\
       Berlin Institute of Technology\\
       Berlin, Germany
       \AND
       \name Hiroshi Kera \email \texttt{\href{mailto:kera.hiroshi@gmail.com}{kera.hiroshi@gmail.com}} \\
       \addr Graduate School of Engineering\\
       Chiba University\\
       Chiba, Japan
       \AND
       \name Sebastian Pokutta \email \texttt{\href{mailto:pokutta@zib.de}{pokutta@zib.de}} \\
       \addr Institute of Mathematics \& AI in Society, Science, and Technology\\
       Berlin Institute of Technology \& Zuse Institute Berlin\\
       Berlin, Germany}

\maketitle

\begin{abstract}
    The \emph{vanishing ideal} of a set of points $X = \{\xx_1, \ldots, \xx_m\}\subseteq \R^n$ is the set of polynomials that evaluate to $0$ over all points $\xx \in X$ and admits an efficient representation by a finite subset of generators. In practice, to accommodate noise in the data, algorithms that construct generators of the \emph{approximate vanishing ideal} are widely studied but their computational complexities remain expensive. In this paper, we scale up the \emph{oracle approximate vanishing ideal algorithm} (\oavi{}), the only generator-constructing algorithm with known learning guarantees. We prove that the computational complexity of \oavi{} is not superlinear, as previously claimed, but linear in the number of samples $m$. In addition, we propose two modifications that accelerate \oavi{}'s training time: Our analysis reveals that replacing the \emph{pairwise conditional gradients algorithm}, one of the solvers used in \oavi{}, with the faster \emph{blended pairwise conditional gradients algorithm} leads to an exponential speed-up in the number of features $n$. Finally, using a new \emph{inverse Hessian boosting} approach, intermediate convex optimization problems can be solved almost instantly, improving \oavi{}'s training time by multiple orders of magnitude in a variety of numerical experiments.
\end{abstract}
\section{Introduction}\label{sec:introduction}
High-quality features are essential for the success of machine-learning algorithms \citep{guyon2003introduction} and as a consequence, feature transformation and selection algorithms are an important area of research \citep{kusiak2001feature, van2009dimensionality, abdi2010principal, paul2021multi, manikandan2021feature, carderera2021cindy}. A recently popularized technique for extracting nonlinear features from data is the concept of the vanishing ideal \citep{heldt2009approximate, livni2013vanishing}, which lies at the intersection of machine learning and computer algebra. Unlike conventional machine learning, which relies on a manifold assumption, vanishing ideal computations are based on an algebraic set\footnote{A set $X \subseteq \R^n$ is \emph{algebraic} if it is the set of common roots of a finite set of polynomials.} assumption, for which powerful theoretical guarantees are known \citep{vidal2005generalized, livni2013vanishing, globerson2017effective}.
The core concept of vanishing ideal computations is that any data set $X = \{\xx_1, \ldots, \xx_m\} \subseteq \R^n$ can be described by its \emph{vanishing ideal}, 
\begin{align*}
    \cI_X = \{g \in \cP \mid g(\xx) = 0 \ \text{for all} \ \xx \in X\},
\end{align*}
where $\cP$ is the polynomial ring over $\R$ in $n$ variables.
Despite $\cI_X$ containing infinitely many polynomials, there exists a finite number of \emph{generators} of $\cI_X$, $g_1, \ldots, g_k \in \cI_X$ with $k\in \N$, such that any polynomial $h \in \cI_X$ can be written as 
\begin{align*}
    h = \sum_{i=1}^k g_i h_i,
\end{align*}
where $h_i \in \cP$ for all $i \in \{1, \ldots, k\}$ \citep{cox2013ideals}. Thus, the generators share any sample $\xx\in X$ as a common root, capture the nonlinear structure of the data, and succinctly represent the vanishing ideal $\cI_X$. Due to noise in empirical data, we are interested in constructing generators of the \emph{approximate vanishing ideal}, the ideal generated by the set of polynomials that approximately evaluate to $0$ for all $\xx \in X$ and whose leading term coefficient is $1$, see Definition~\ref{def:approximate_vanishing_polynomial}.
For classification tasks, constructed generators can, for example, be used to transform the features of the data set $X \subseteq \R^n$ such that the data becomes linearly separable \citep{livni2013vanishing} and training a linear kernel \emph{support vector machine} (\svm{}) \citep{suykens1999least} on the feature-transformed data results in excellent classification accuracy. 
Various algorithms for the construction of generators of the approximate vanishing ideal exist \citep{heldt2009approximate, fassino2010almost, limbeck2013computation, livni2013vanishing, iraji2017principal, kera2020gradient, kera2022border}, but among them, the \emph{oracle approximate vanishing ideal algorithm} (\oavi{}) \citep{wirth2022conditional} is the only one capable of constructing sparse generators and admitting learning guarantees.
More specifically, \cg{}\avi{} (\oavi{} with \emph{Frank-Wolfe algorithms} \citep{frank1956algorithm}, a.k.a. \emph{conditional gradients algorithms} (\cg{}) \citep{levitin1966constrained} as a solver) exploits the sparsity-inducing properties of \cg{} to construct sparse generators and, thus, a robust and interpretable corresponding feature transformation.
Furthermore, generators constructed with \cgavi{} vanish on out-sample data
and the combined approach of transforming features with \cgavi{} for a subsequently applied linear kernel \svm{} inherits the margin bound of the \svm{} \citep{wirth2022conditional}.
Despite \oavi{}'s various appealing properties, the computational complexities of vanishing ideal algorithms for the construction of generators of the approximate vanishing ideal are superlinear in the number of samples $m$. With training times that increase at least cubically with $m$, vanishing ideal algorithms have yet to be applied to large-scale machine-learning problems.

\subsection{Contributions}\label{sec:contributions}

In this paper, we improve and study the scalability of \oavi{}.
\\
\textbf{Linear computational complexity in $m$.}
Up until now, the analysis of computational complexities of approximate vanishing ideal algorithms assumed that generators need to vanish exactly, which gave an overly pessimistic estimation of the computational cost. For \oavi{}, we exploit that generators only have to vanish approximately and prove that the computational complexity of \oavi{} is not superlinear but linear in the number of samples $m$ and polynomial in the number of features $n$.
\\
\textbf{Solver improvements.}
\oavi{} repeatedly calls a solver of quadratic convex optimization problems to construct generators. By replacing the \emph{pairwise conditional gradients algorithm} (\pcg{}) \citep{lacoste2015global}
with the faster \emph{blended pairwise conditional gradients algorithm} (\bpcg{}) \citep{tsuji2022pairwise}, we improve the dependence of the time complexity of \oavi{} on the number of features $n$ in the data set by an exponential factor.
\\
\textbf{Inverse Hessian boosting (\ihb{}).}
\oavi{} solves a series of quadratic convex optimization problems that differ only slightly and we can efficiently maintain and update the inverse of the corresponding Hessians.
\emph{Inverse Hessian boosting} (\ihb{}) then refers to the procedure of passing a starting vector, computed with inverse Hessian information, close to the optimal solution to the convex solver used in \oavi{}. Empirically, \ihb{} speeds up the training time of \oavi{} by multiple orders of magnitude.
\\
\textbf{Large-scale numerical experiments.}
We perform numerical experiments on data sets of up to two million samples, highlighting that \oavi{} is an excellent large-scale feature transformation method.

\subsection{Related work}\label{sec:related_work}

The \emph{Buchberger-Möller algorithm} was the first method for constructing generators of the vanishing ideal \citep{moller1982construction}. Its high susceptibility to noise was addressed by \citet{heldt2009approximate} with the \emph{approximate vanishing ideal algorithm} (\avi{}), see also \citet{fassino2010almost, limbeck2013computation}. The latter introduced two algorithms that construct generators term by term instead of degree-wise such as \avi{}, the \emph{approximate Buchberger-Möller algorithm} (\abm{}) and the \emph{border bases approximate Buchberger-Möller algorithm}. The aforementioned algorithms are monomial-aware, that is, they require an explicit ordering of terms and construct generators as linear combinations of monomials. However, monomial-awareness is an unattractive property: Changing the order of the features changes the outputs of the algorithms.
Monomial-agnostic approaches such as \emph{vanishing component analysis} (\vca{}) \citep{livni2013vanishing} do not suffer from this shortcoming, as they construct generators as linear combinations of polynomials. \vca{} found success in hand posture recognition, solution selection using genetic programming, principal variety analysis for nonlinear data modeling, and independent signal estimation for blind source separation \citep{zhao2014hand,kera2016vanishing,iraji2017principal, wang2018nonlinear}.
The disadvantage of foregoing the term ordering is that \vca{} sometimes constructs multiple orders of magnitude more generators than monomial-aware algorithms \citep{wirth2022conditional}. Furthermore, \vca{} is susceptible to the \emph{spurious vanishing problem}: Polynomials that do not capture the nonlinear structure of the data but whose coefficient vector entries are small become generators, and, conversely, polynomials that capture the data well but whose coefficient vector entries are large get treated as non-vanishing. The problem was partially addressed by \citet{kera2019spurious, kera2020gradient, kera2021monomial}.

\section{Preliminaries}\label{sec:preliminaries}

Throughout, let $\ell, k,m,n \in \N$.
We denote vectors in bold and let $\zeros\in \R^n$ denote the $0$-vector.
Sets of polynomials are denoted by capital calligraphic letters. We denote the set of terms (or monomials) and the polynomial ring over $\R$ in $n$ variables by $\cT$ and $\cP$, respectively. 
For $\tau \geq 0$, a polynomial $g = \sum_{i=1}^k c_i t_i\in \cP$ with $\cc = (c_1, \ldots, c_k)^\intercal$ is said to be \emph{$\tau$-bounded} in the $\ell_1$-norm if the $\ell_1$-norm of its coefficient vector is bounded by $\tau$, that is, if $\|g\|_1 := \|\cc\|_1 \leq \tau$.
Given a polynomial $g\in \cP$, let $\deg(g)$ denote its \emph{degree}.
The sets of polynomials in $n$ variables of and up to degree $d\in\N$ are denoted by $\cP_{ d}$ and $\cP_{\leq d}$, respectively. Similarly, for a set of polynomials $\cG \subseteq \cP$, let $\cG_d = \cG \cap \cP_d$ and $\cG_{\leq d}=\cG \cap \cP_{\leq d}$.
We often assume that $X = \{\xx_1, \ldots, \xx_m\} \subseteq [0, 1]^n$, a form that can be realized, for example, via \emph{min-max feature scaling}.
Given a polynomial $g\in \cP$ and a set of polynomials $\cG= \{g_1, \ldots, g_k\}\subseteq \cP$, define the \emph{evaluation vector} of $g$ and \emph{evaluation matrix} of $\cG$ over $X$ as
$g(X) =(g(\xx_1), \ldots, g(\xx_m) )^\intercal\in \R^m $ and $\cG(X)=(g_1(X), \ldots, g_k(X))\in \R^{m \times k}$,
respectively. Further, define the \emph{mean squared error} of $g$ over $X$ as
\begin{align*}
    \mse(g,X) = \frac{1}{|X|}  \left\|g(X)\right\|_2^2 =  \frac{1}{m} \left\|g(X)\right\|_2^2.
\end{align*}

\oavi{} sequentially processes terms according to a so-called \emph{term ordering}, as is necessary for any monomial-aware algorithm. For ease of presentation, we restrict our analysis to the \emph{degree-lexicographical ordering of terms} (DegLex) \citep{cox2013ideals}, denoted by $<_\sig$. For example, given the terms $t, u, v \in \cT_1$, DegLex works as follows:
\begin{align*}
    \oneterm <_\sig t <_\sig u <_\sig v <_\sig t^2 <_\sig t\cdot u <_\sig t \cdot v <_\sig u^2 <_\sig u\cdot v <_\sig v^2 <_\sig t^3<_\sig \ldots,
\end{align*}
where $\oneterm$ denotes the constant-$1$ term.
Given a set of terms $\cO = \{t_1, \ldots, t_{k}\}_\sig \subseteq \cT$, the subscript $\sig$ indicates that $t_1 <_\sig \ldots <_\sig t_{k}$.
\begin{definition}[Leading term (coefficient)]\label{def:ltc}
Let $g = \sum_{i=1}^kc_i t_i\in \cP$ with $c_i \in \R$ and $t_i\in \cT$ for all $i\in \{1, \ldots, k\}$ and let $j \in \{1, \ldots, k\}$ such that $t_j >_\sig t_i$ for all $i \in\{1,\ldots,k\}\setminus\{j\}$.
Then, $t_j$ and $c_j$ are called \emph{leading term} and \emph{leading term coefficient} of $g$, denoted by $\lt(g) = t_j$ and $\ltc(g) = c_j$, respectively.
\end{definition}

We thus define approximately vanishing polynomials via the mean squared error as follows.
\begin{definition}
[Approximately vanishing polynomial]
\label{def:approximate_vanishing_polynomial}
Let $X=\{\xx_1, \ldots, \xx_m\}\subseteq \R^n$, $\psi \geq 0$, and $\tau \geq 2$. A polynomial $g\in\cP$ is \emph{$\psi$-approximately vanishing} (over $X$) if $\mse(g,X) \leq \psi$. If also $\ltc(g)=1$ and $\|g\|_1 \leq \tau$, then $g$ is called \emph{$(\psi, 1, \tau)$-approximately vanishing} (over $X$).
\end{definition}
In the definition above, we fix the leading term coefficient of polynomials to address the spurious vanishing problem, and the requirement that polynomials are $\tau$-bounded in the $\ell_1$-norm is necessary for the learning guarantees of \oavi{} to hold.
\begin{definition}
[Approximate vanishing ideal]
\label{def:approximately_vanishing_ideal}
Let $X=\{\xx_1, \ldots, \xx_m\}\subseteq \R^n$, $\psi \geq 0$, and $\tau \geq 2$.  The \emph{$(\psi, \tau)$-approximate vanishing ideal} (over $X$), $\cI^{\psi,\tau}_X$, is the ideal generated by all $(\psi, 1, \tau)$-approximately vanishing polynomials over $X$.
\end{definition}
For $\psi = 0$ and $\tau = \infty$, it holds that $\cI^{0,\infty}_X =\cI_X $, that is, the approximate vanishing ideal becomes the vanishing ideal. 
Finally, we introduce the generator-construction problem addressed by \oavi{}.
\begin{problem}
[Setting]
\label{problem:oavi}
Let $X=\{\xx_1, \ldots, \xx_m\}\subseteq \R^n$, $\psi\geq 0$, and $\tau \geq 2$.
Construct a set of $(\psi, 1, \tau)$-approximately vanishing generators of $\cI^{\psi,\tau}_X$.
\end{problem}

Recall that for $t, u \in \cT$, $t$ \emph{divides (or is a divisor of)} $u$, denoted by $t\mid u$, if there exists $v\in \cT$ such that $t \cdot v = u$. If $t$ does not divide $u$, we write $t\nmid u$. \oavi{} constructs generators of the approximate vanishing ideal of degree $d\in \N$ by checking whether terms of degree $d$ are leading terms of an approximately vanishing generator. As explained in \citet{wirth2022conditional}, \oavi{} does not have to consider all terms of degree $d$ but only those contained in the subset defined below.
\begin{definition}
[Border]
\label{def:border}
Let $\cO\subseteq \cT$. The \emph{(degree-$d$) border} of $\cO$ is defined as
\begin{align*}
    \partial_d \cO= \{u \in \cT_{d} \colon t \in \cO_{\leq d-1} \text{ for all } t \in \cT_{\leq d-1} \text{ such that } t \mid u\}.
\end{align*}
\end{definition}

\section{Oracle approximate vanishing ideal algorithm (\oavi{})}\label{sec:oavi}

\begin{algorithm}[t]
\SetKwInOut{Input}{Input}\SetKwInOut{Output}{Output}
\SetKwComment{Comment}{$\triangleright$\ }{}
\Input{A data set $X = \{\xx_1, \ldots, \xx_m\} \subseteq \R^n$ and parameters $\psi \geq 0$ and $\tau \geq 2$.}
\Output{A set of polynomials $\cG\subseteq \cP$ and a set of monomials $\cO\subseteq \cT$.}
\hrulealg
{$d \gets 1$, $\cO=\{t_1\}_\sig \gets \{\oneterm\}_\sig$, $\cG\gets \emptyset$} \\
\While(\label{alg:oavi_while_loop}){$\partial_d \cO = \{u_1, \ldots, u_k\}_\sig \neq \emptyset$}{
    \For(\label{alg:oavi_for_loop}){$i = 1, \ldots, k$}{
        {$\ell \gets |\cO|\in\N$, $A \gets \cO(X)\in\R^{m\times \ell}$, $\bb \gets u_i(X)\in\R^m$, $P=\{\yy\in \R^\ell \mid \|\yy\|_1 \leq \tau - 1\}$\label{alg:oavi_notation}\\}
        {$\cc \in \argmin_{\yy\in P} \frac{1}{m}\|A \yy + \bb\|_2^2$ \label{alg:oavi_oracle} \Comment*[f]{call to convex optimization oracle}}\\
        {$g \gets \sum_{j = 1}^{\ell} c_j t_j + u_i$\label{alg:oavi_g}} \\
        \uIf (\label{alg:oavi_evaluate_mse}\Comment*[f]{check whether $g$ vanishes}){$\mse(g, X) \leq \psi$}{
            {$\cG \gets \cG \cup \{g\}$ \label{alg:oavi_add_G}}
             } 
        \Else {
        {$\cO = \{t_1, \ldots, t_{\ell + 1}\}_\sig \gets (\cO \cup \{u_i\})_{\sig}$ \label{alg:oavi_add_O}}
        }\label{alg:oavi_end_else}
    }\label{alg:oavi_end_for_loop}
    {$ d \gets d + 1$} 
}
\caption{Oracle approximate vanishing ideal algorithm (\oavi{})} \label{algorithm:oavi}
\end{algorithm}

In this section, we recall the oracle approximate vanishing ideal algorithm (\oavi{}) \citep{wirth2022conditional} in Algorithm~\ref{algorithm:oavi}, a method for solving Problem~\ref{problem:oavi}.

\subsection{Algorithm overview}\label{sec:oavi_algorithm_overview}

\oavi{} takes as input a data set set $X = \{\xx_1, \ldots, \xx_m\} \subseteq \R^n$, a vanishing parameter $\psi \geq 0$, and a tolerance $\tau\geq 2$ such that the constructed generators are $\tau$-bounded in the $\ell_1$-norm. From a high-level perspective, \oavi{} constructs a finite set of $(\psi, 1, \tau)$-approximately vanishing generators of the $(\psi, \tau)$-approximate vanishing ideal $\cI_X^{\psi, \tau}$ by solving a series of constrained convex optimization problems.
\oavi{} tracks the set of terms $\cO\subseteq \cT$ such that there does not exist a $(\psi, 1, \tau)$-approximately vanishing generator of $\cI_X^{\psi, \tau}$ with terms only in $\cO$ and the set of generators $\cG\subseteq \cP$ of $\cI_X^{\psi, \tau}$. For every degree $d \geq 1$, \oavi{} computes the border $\partial_d \cO$ in Line~\ref{alg:oavi_while_loop}. Then, in Lines~{\ref{alg:oavi_for_loop}--\ref{alg:oavi_end_for_loop}}, for every term $u \in \partial_d\cO$, \oavi{} determines whether there exists a $(\psi, 1, \tau)$-approximately vanishing generator $g$ of $\cI_X^{\psi, \tau}$ with $\lt(g) = u$ and other terms only in $\cO$ via oracle access to a solver of the constrained convex optimization problem in Line~\ref{alg:oavi_oracle}. If such a $g$ exists, it gets appended to $\cG$ in Line~\ref{alg:oavi_add_G}. Otherwise, the term $u$ gets appended to $\cO$ in Line~\ref{alg:oavi_add_O}.
\oavi{} terminates when a degree $d\in \N$ is reached such that $\partial_d\cO = \emptyset$. 
For ease of exposition, unless noted otherwise, we ignore that the optimization problem in Line~\ref{alg:oavi_oracle} of \oavi{} is addressed only up to a certain accuracy $\eps > 0$.
Throughout, we denote the output of \oavi{} by $\cG$ and $\cO$, that is, $(\cG, \cO) = \oavi{}(X,\psi, \tau)$. We replace the first letter in \oavi{} with the abbreviation corresponding to the solver used to solve the optimization problem in Line~\ref{alg:oavi_oracle}. For example, \oavi{} with solver \cg{} is referred to as \cgavi{}.

\subsection{Preprocessing with \oavi{} in a classification pipeline}\label{sec:pipeline}

\oavi{} can be used as a feature transformation method for a subsequently applied linear kernel support vector machine (\svm{}).
Let $\cX \subseteq \R^n$ and $\cY = \{1, \ldots, k\}$ be an input and output space, respectively. We are given a training sample $S = \{(\xx_1, y_1), \ldots, (\xx_m, y_m)\}\in (\cX\times \cY)^m$ drawn $i.i.d.$ from some unknown distribution $\cD$. Our task is to determine a \emph{hypothesis} $h\colon \cX \to \cY$ with small \emph{generalization error}
$\P_{(\xx, y) \backsim \cD}[h(\xx) \neq y]$.
For each class $i\in\{1, \ldots, k\}$, let $X^i = \{\xx_j \in X \mid y_j = i \} \subseteq X = \{\xx_1, \ldots, \xx_m\}$ denote the subset of samples corresponding to class $i$ and construct a set of $(\psi, 1, \tau)$-approximately vanishing generators $\cG^i$ of the $(\psi, \tau)$-approximate vanishing ideal $\cI_{X^i}^{\psi,\tau}$ via \oavi{}, that is, $(\cG^i, \cO^i) = \oavi{}(X^i, \psi, \tau)$. We combine the sets of generators corresponding to the different classes to obtain
$\cG = \{g_1, \ldots, g_{|\cG|}\} = \bigcup_{i=1}^k\cG^i$,
which encapsulates the feature transformation we are about to apply to data set $X$.
As proposed in \citet{livni2013vanishing}, we transform the training sample $X$ via the feature transformation
\begin{equation}\label{eq:feature_transformation}\tag{FT}
    \xx_j \mapsto \tilde{\xx}_j =  \left(|g_1(\xx_j)|, \ldots , |g_{|\cG|} (\xx_j)|\right)^\intercal \in \R^{|\cG|}
\end{equation}
for $j = 1, \ldots, m$. The motivation behind this feature transformation is that a polynomial $g \in \cG^i$ vanishes approximately over all $\xx \in X^i$ and (hopefully) attains values that are far from zero over points $\xx \in X \setminus X^i$. 
We then train a linear kernel \svm{} on the feature-transformed data $\tilde{S} = \{(\tilde{\xx}_1, y_1), \ldots, (\tilde{\xx}_m, y_m)\}$ with $\ell_1$-regularization to promote sparsity. If $\psi = 0$, $\tau = \infty$, and the underlying classes of $S$ belong to disjoint algebraic sets, then the different classes become linearly separable in the feature space corresponding to transformation \eqref{eq:feature_transformation} and perfect classification accuracy is achieved with the linear kernel \svm{} \citep{livni2013vanishing}.

\subsection{Solving the optimization problem in Line~\ref{alg:oavi_oracle} of \oavi{}}\label{sec:solver_selection}

Setting $\tau = \infty$, the optimization problem in Line~\ref{alg:oavi_oracle} becomes unconstrained and can, for example, be addressed with \emph{accelerated gradient descent} (\agd{}) \citep{nesterov1983method}. If, however, $\tau < \infty$, \oavi{} satisfies two generalization bounds. Indeed,
assuming that the data is of the form $X\subseteq [0,1]^n$, the generators in $\cG$ are guaranteed to also vanish on out-sample data. Furthermore, the combined approach of using generators obtained via \oavi{} to preprocess the data for a subsequently applied linear kernel \svm{} inherits the margin bound of the \svm{} \citep{wirth2022conditional}.

\section{\oavi{} at scale}\label{sec:oavi_at_scale}

Here, we address the main shortcoming of vanishing ideal algorithms: time, space, and evaluation complexities that depend superlinearly on $m$, where $m$ is the number of samples in the data set $X = \{\xx_1, \ldots, \xx_m\} \subseteq \R^n$. Recall the computational complexity of \oavi{} for $\tau = \infty$.
\begin{theorem}
[Complexity {\citep{wirth2022conditional}}]
\label{thm:computational_complexity}
Let $X = \{\xx_1, \ldots, \xx_m\} \subseteq \R^n$, $\psi \geq 0$, $\tau = \infty$, and $(\cG, \cO) = \oavi{}(X,\psi,\tau)$. Let $T_\cvxoracle{}$ and $S_\cvxoracle{}$ be the time and space complexities required to solve the convex optimization problem in Line~\ref{alg:oavi_oracle} of \oavi{}, respectively. In the real number model, the time and space complexities of \oavi{} are $O((|\cG| + |\cO|)^2 + (|\cG| + |\cO|) T_\cvxoracle{})$ and $O((|\cG| + |\cO|)m + S_\cvxoracle{})$, respectively. The evaluation vectors of all polynomials in $\cG$ over a set $Z=\{\zz_1, \ldots, \zz_q\}  \subseteq \R^n$ can be computed in time $O((|\cG| + |\cO|)^2q)$.
\end{theorem}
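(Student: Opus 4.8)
The plan is to amortize the whole execution of \oavi{} over the iterations of the inner \textbf{for}-loop in Lines~\ref{alg:oavi_for_loop}--\ref{alg:oavi_end_for_loop}. First comes a counting argument: each iteration processes one border term $u_i$ and then either appends it to $\cG$ (Line~\ref{alg:oavi_add_G}) or to $\cO$ (Line~\ref{alg:oavi_add_O}), exclusively; since $\cO$ is initialized as $\{\oneterm\}$, the total number of iterations over the entire run is exactly $|\cG|+|\cO|-1 = O(|\cG|+|\cO|)$. I would also record three structural facts used repeatedly: $\partial_1\{\oneterm\}=\cT_1$ contains all $n$ degree-one monomials, each of which is processed, so $n \le |\cG|+|\cO|$; a term $u$ is moved to $\cO$ only when $\mse(g,X) > \psi \ge 0$, i.e.\ only when $u(X)$ lies outside the column space of $\cO(X)$, so $\cO(X)$ always has full column rank and hence $|\cO|\le m$; and $\cO$ stays closed under division (an easy induction).

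Next I would bound the per-iteration cost. The matrix $A=\cO(X)\in\R^{m\times\ell}$ and the data columns $x_j(X)$ are maintained incrementally, not recomputed: every degree-$d$ border term factors as $u = x_j\cdot t$ with $t\in\cO_{d-1}$ one of its own divisors, so $u(X)$ is the entrywise product of two stored length-$m$ vectors, at cost $O(m)$; moreover, keeping one counter per candidate term that records how many of its degree-$(d-1)$ divisors already lie in $\cO$ generates and validates every border term in $\partial_d\cO$ across all $d$ with total combinatorial cost $O(n(|\cG|+|\cO|)) = O((|\cG|+|\cO|)^2)$. Forming $g$ from the oracle output and evaluating $\mse(g,X) = \tfrac1m\|A\cc+\bb\|_2^2$ is one matrix--vector product plus a norm, $O(m\ell)$, and writing the new generator's coefficient vector is $O(\ell)\le O(m\ell)$. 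Crucially, in the real-number model the oracle call in Line~\ref{alg:oavi_oracle} already receives $A\in\R^{m\times\ell}$ as input, so $T_\cvxoracle{} = \Omega(m\ell)$, and every $m$-dependent operation in the iteration is absorbed into $T_\cvxoracle{}$. All remaining per-iteration work — sorted-set maintenance of $\cO$, amortized border bookkeeping, assembling $\cG$ — is $O(|\cG|+|\cO|)$. Summing over the $O(|\cG|+|\cO|)$ iterations gives time $O((|\cG|+|\cO|)^2 + (|\cG|+|\cO|)T_\cvxoracle{})$.

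For space: storing $X$ and $A=\cO(X)$ costs $O(mn + m|\cO|) = O(m(|\cG|+|\cO|))$ using $n\le|\cG|+|\cO|$; storing $\cG$ as coefficient vectors costs $O(|\cG|(|\cO|+1)) = O(|\cG|\cdot m) = O(m(|\cG|+|\cO|))$ using $|\cO|\le m$; and the oracle uses $S_\cvxoracle{}$, giving $O((|\cG|+|\cO|)m + S_\cvxoracle{})$. For the out-of-sample evaluation claim, I would replay the incremental construction over $Z=\{\zz_1,\ldots,\zz_q\}$: build the matrix $M\in\R^{q\times(|\cO|+|\cG|)}$ whose columns are $t(Z)$ for $t$ ranging over $\cO$ together with the leading terms of the generators, via the same entrywise products, in time $O((|\cG|+|\cO|)q)$; then collect the generators' coefficient vectors into a matrix $C\in\R^{(|\cO|+|\cG|)\times|\cG|}$ and compute $MC$, which is precisely the list of evaluation vectors $\{g(Z) : g\in\cG\}$, in time $O((|\cG|+|\cO|)^2 q)$, dominating the construction of $M$.

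The part I expect to require the most care is charging the discrete bookkeeping so that it stays within $O((|\cG|+|\cO|)^2)$ — in particular, arguing that enumerating and validating all border terms across all degrees does not incur an extra factor of $n$ or of the maximal degree, which is exactly where the incremental divisor-counters and the bound $n\le|\cG|+|\cO|$ are needed — together with the easy but essential observation that every $m$-dependent operation inside a loop iteration is dominated by the single oracle call there, so that no stray factor of $m$ survives in the time bound. The remaining steps are routine counting and linear algebra.
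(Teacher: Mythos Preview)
The paper does not prove this theorem: it is quoted verbatim from \citet{wirth2022conditional} and no proof appears anywhere in the present manuscript. So there is nothing in the paper to compare your argument against line by line.

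That said, your proposal is the standard amortized analysis for this type of border-basis-style algorithm and is essentially correct. The key observations you isolate --- that the inner loop runs exactly $|\cG|+|\cO|-1$ times, that $n\le |\cG|+|\cO|$ because every degree-one monomial is processed, that $\cO(X)$ retains full column rank so $|\cO|\le m$, and that reading the input $A\in\R^{m\times\ell}$ already forces $T_\cvxoracle{}=\Omega(m\ell)$ so all per-iteration $m$-linear work is absorbed into the oracle cost --- are exactly the ingredients one needs, and they are assembled correctly. The evaluation-complexity argument via one matrix product $MC$ is also the natural one. The only place to be slightly more careful in a formal write-up is the border bookkeeping: your divisor-counter scheme is right in spirit, but you should state explicitly that the total number of (candidate term, degree-$(d{-}1)$ divisor) incidences across all degrees is $O(n|\cO|)$, since each $t\in\cO$ is a maximal proper divisor of at most $n$ candidate terms, and then invoke $n\le|\cG|+|\cO|$.
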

Under mild assumptions, \citet{wirth2022conditional} proved that $O(|\cG| + |\cO|) =O(mn)$, implying that \oavi{}'s computational complexity is superlinear in $m$. In this work, we improve \oavi{}'s computational complexity both with a tighter theoretical analysis and algorithmic modifications.

\subsection{Number-of-samples-agnostic bound on $|\cG| + |\cO|$}\label{sec:a_tighter_analysis}

For $\psi > 0$, \oavi{}, \abm{}, \avi{}, and \vca{} construct approximately vanishing generators of the $(\psi, \tau)$-approximate vanishing ideal. Despite being designed for the approximate setting ($\psi > 0$), so far, the analysis of these algorithms was only conducted for the exact setting ($\psi = 0$). Below, we exploit that $\psi > 0$ in \oavi{} and obtain the first number-of-samples-agnostic bound on the size of a generator-constructing algorithm's output. The result below also highlights the connection between the size of \oavi{}'s output $|\cG| + |\cO|$ and the extent of vanishing $\psi$: Large $\psi$ implies small $|\cG| + |\cO|$ and small $\psi$ implies large $|\cG| + |\cO|$.

\begin{theorem}[Bound on $|\cG| + |\cO|$]\label{thm:better_bound_on_G_O}
Let $X = \{\xx_1, \ldots, \xx_m\}\subseteq [0, 1]^n$, $\psi \in \left]0, 1\right[$, $\db = \left\lceil -\log (\psi) / \log (4) \right\rceil$, $\tau \geq \left(3/2\right)^\db$, and $(\cG, \cO) = \oavi{}(X, \psi, \tau)$. Then, \oavi{} terminates after having constructed generators of degree $\db$. Thus, $|\cG| + |\cO| \leq \binom{\db + n}{\db}$.
\end{theorem}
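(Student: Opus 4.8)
The plan is to prove the stronger statement that, once \oavi{} reaches degree $\db$, \emph{every} term of $\partial_\db\cO$ becomes the leading term of a generator. Granting this, no term of degree $\db$ is ever added to $\cO$, so every monomial of degree $\db+1$ has a divisor of degree $\db$ lying outside $\cO$; hence $\partial_{\db+1}\cO=\emptyset$ and the while loop in Line~\ref{alg:oavi_while_loop} exits after the degree-$\db$ sweep. (If \oavi{} instead stops at some degree $d<\db$, all constructed objects have degree $<\db$ and the bound below only improves.) Finally, the terms in $\cO$ and the leading terms of the polynomials in $\cG$ are pairwise distinct monomials of degree at most $\db$, and $\cT_{\le\db}$ contains exactly $\binom{\db+n}{\db}$ such monomials, which yields $\card{\cG}+\card{\cO}\le\binom{\db+n}{\db}$.

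The crux is to exhibit, for a fixed degree-$\db$ iteration and a fixed $u=\prod_{i=1}^n x_i^{m_i}\in\partial_\db\cO$ (so $\sum_i m_i=\db$), a feasible point of the oracle problem in Line~\ref{alg:oavi_oracle} with objective value at most $\psi$. I would take the coefficient vector of $g-u$, where
\[
    g \defeq \prod_{i=1}^n\Bigl(x_i-\tfrac12\Bigr)^{m_i}.
\]
Three facts need checking. First, expanding $g$ shows its monomials are exactly the divisors of $u$: the unique monomial of degree $\db$ is $u$ with coefficient $1$, so $\lt(g)=u$ and $\ltc(g)=1$, and every other monomial has degree at most $\db-1$ and divides $u$, hence lies in $\cO$ by Definition~\ref{def:border} (these divisors were already in $\cO$ when $\partial_\db\cO$ was formed, and $\cO$ never shrinks during a degree sweep). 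Thus $g-u$ lies in the span of $\cO$, so $(g-u)(X)=A\yy$ for some $\yy\in\R^{\card\cO}$ with $\norm{\yy}_1=\norm{g-u}_1$, i.e. $g(X)=A\yy+\bb$. Second, since the coefficient $\ell_1$-norm is multiplicative over products of polynomials in disjoint variables, $\norm{g}_1=\prod_i(3/2)^{m_i}=(3/2)^\db$, so $\norm{\yy}_1=(3/2)^\db-1\le\tau-1$ and hence $\yy\in P$. Third, $X\subseteq[0,1]^n$ gives $\abs{x_i-1/2}\le1/2$ on $X$, so $\abs{g(\xx)}\le(1/2)^\db$ for all $\xx\in X$ and $\mse(g,X)\le4^{-\db}\le\psi$, the last inequality being exactly the choice $\db=\lceil-\log(\psi)/\log(4)\rceil$.

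Putting these together, $\yy\in P$ and $\frac1m\norm{A\yy+\bb}_2^2=\mse(g,X)\le\psi$, so the minimizer $\cc$ of Line~\ref{alg:oavi_oracle} also has objective value at most $\psi$; the polynomial built in Line~\ref{alg:oavi_g} then passes the test in Line~\ref{alg:oavi_evaluate_mse} and is appended to $\cG$, while $u$ is not appended to $\cO$. Applying this to each $u\in\partial_\db\cO$ in turn establishes the stronger statement, and the cardinality bound follows as indicated.

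The one place demanding care is the interplay between the border condition and the span constraint in Line~\ref{alg:oavi_oracle}: the lower-order terms of the test polynomial $g$ must genuinely be available in $\cO$ at the moment $u$ is processed, which is precisely what $u\in\partial_\db\cO$ guarantees, together with the fact that $\cO$ only grows within the for-loop. The remaining estimates are routine once $g=\prod_i(x_i-1/2)^{m_i}$ is written down; finding this polynomial — small on $[0,1]^n$ because each factor lies in $[-1/2,1/2]$, yet with coefficient $\ell_1$-norm only $(3/2)^\db$, and with lower-order support confined to divisors of $u$ — is the actual idea of the proof.
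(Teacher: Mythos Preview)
Your proof is correct and follows essentially the same approach as the paper: both construct the test polynomial $\prod_i(x_i-1/2)^{m_i}$, verify its non-leading monomials are divisors of $u$ and hence lie in $\cO$ by the border condition, bound its $\ell_1$-norm by $(3/2)^{\db}\le\tau$, and bound its $\mse$ by $4^{-\db}\le\psi$. If anything, you are slightly more explicit than the paper in handling early termination and in arguing why $\partial_{\db+1}\cO=\emptyset$ and why $\card{\cG}+\card{\cO}$ is bounded by the number of monomials of degree at most $\db$.
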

\begin{proof}
Let $X = \{\xx_1, \ldots, \xx_m\}\subseteq [0, 1]^n$ and let $t_1, \ldots, t_n \in \cT_1$ be the degree-$1$ monomials. Suppose that during \oavi{}'s execution, for some degree $d\in \N$, \oavi{} checks whether the term $u = \prod_{j=1}^n t_j^{\alpha_j} \in \partial_d\cO$, where $\sum_{j = 1}^n \alpha_j = d$ and $\alpha_j \in \N$ for all $j \in \{1, \ldots, n\}$, is the leading term of a $(\psi, 1, \tau)$-approximately vanishing generator with non-leading terms only in $\cO$.
Let
\begin{align}\label{eq:poly_proving_G_O}
    h = \prod_{j=1}^n \left(t_j - \frac{1}{2} \oneterm\right)^{\alpha_j},
\end{align}
where $\oneterm$ denotes the constant-$1$ monomial, that is, $\oneterm(\xx) = 1$ for all $\xx\in \R^n$. 
Note that $\|h\|_1 \leq \left( \frac{3}{2}\right)^d \leq \tau$, guaranteeing that $h$ can be constructed in Lines~\ref{alg:oavi_oracle}--\ref{alg:oavi_g} of \oavi{}.
Since $u \in \partial_d\cO$, for any term $t \in \cT$ such that $t \mid u$ and $t\neq u$, it holds that $t\in \cO$ and thus, $h$ is a polynomial with $\ltc(h) = 1$, $\lt(h) = u$, and other terms only in $\cO$.
Under the assumption that the convex optimization oracle in \oavi{} is accurate, $\mse(h,X) \geq \mse(g, X)$, where $g$ is the polynomial constructed during Lines~\ref{alg:oavi_oracle} and~\ref{alg:oavi_g} of \oavi{}. Hence, proving that $h$ vanishes approximately implies that $g$ vanishes approximately. Note that for all $\xx\in X$ and $t\in\cT_1$, it holds that $t(\xx)\in [0,1]$, and, thus, $\lvert t(\xx) - \frac{1}{2} \rvert\leq \frac{1}{2}$. We obtain
\begin{align*}
    \mse(g, X) & \leq \mse(h, X) \\
    & = \frac{1}{m}\|h(X)\|_2^2\\
    & = \frac{1}{m} \sum_{\xx\in X} \left(\prod_{j=1}^n\left(t_j(\xx) - \frac{1}{2} \oneterm(\xx)\right)^{\alpha_j}\right)^2\\
    &\leq \max_{\xx\in X} \prod_{j=1}^n \left\lvert t_j(\xx) - \frac{1}{2} \right\rvert^{2\alpha_j}\\
    & \leq 4^{-d}.
\end{align*}
Since
$\mse(g, X) \leq 4^{-d}  \leq \psi$
is satisfied for
$d \geq \db:= \left\lceil -\frac{\log (\psi)}{\log (4)} \right\rceil$,
\oavi{} terminates after reaching degree $\db$. Thus, at the end of \oavi{}'s execution,
$|\cG| + |\cO| \leq \binom{\db + n}{\db} \leq  (\db + n)^\db$.
\end{proof}

When $T_\cvxoracle{}$ and $S_\cvxoracle{}$ are linear in $m$ and polynomial in $|\cG|+|\cO|$, the time and space complexities of \oavi{} are linear in $m$ and polynomial in $n$. Furthermore, the evaluation complexity of \oavi{} is independent of $m$ and polynomial in $n$.

\subsection{Blended pairwise conditional gradients algorithm (\bpcg{})}

\begin{figure}[t]
\centering
\begin{tabular}{c c c c c}
\begin{subfigure}{.22\textwidth}
    \centering
        \includegraphics[width=1\textwidth]{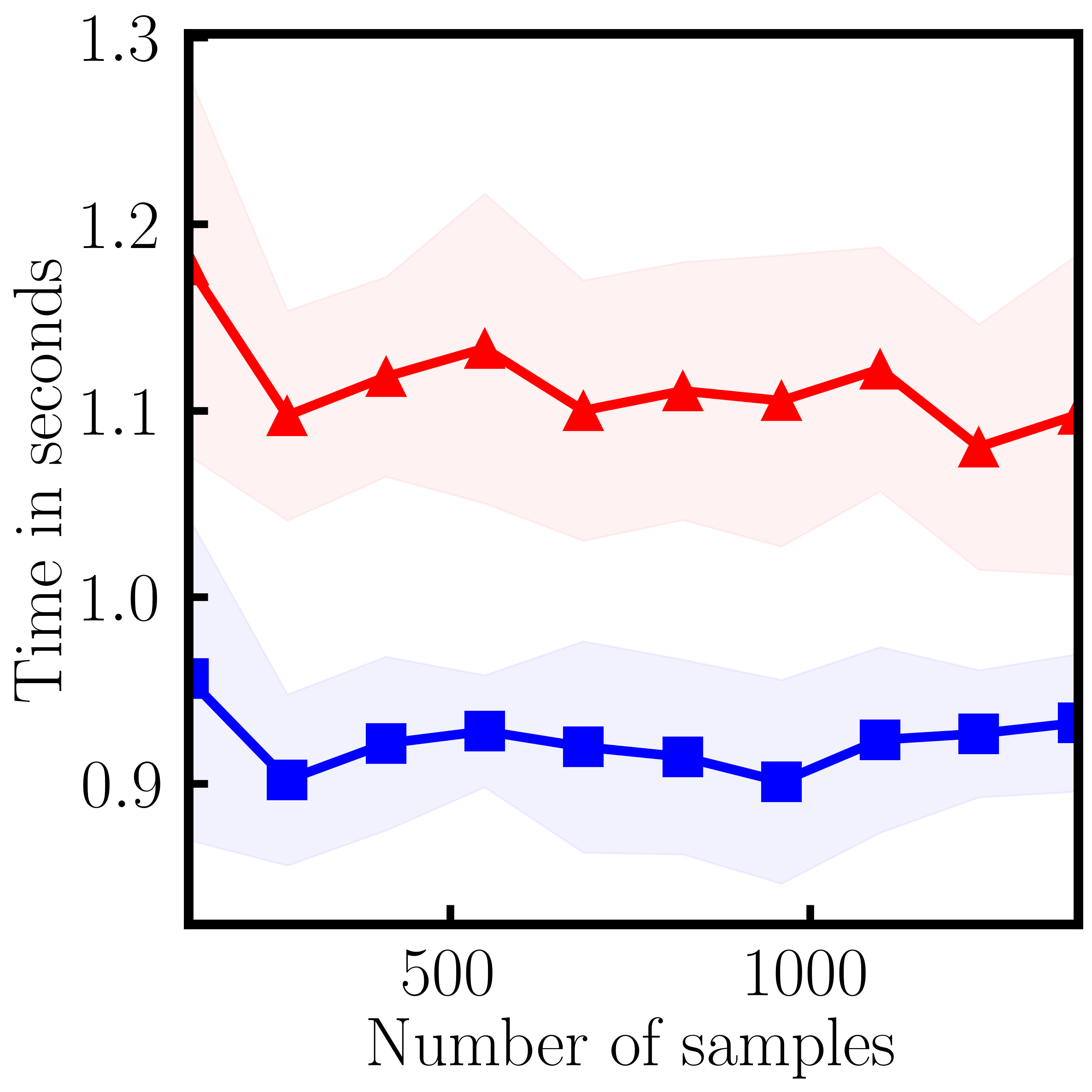}
        \caption{Bank.}
        \label{fig:pcg_vs_bpcg_bank}
    \end{subfigure}
    &
\begin{subfigure}{.22\textwidth}
    \centering
        \includegraphics[width=1\textwidth]{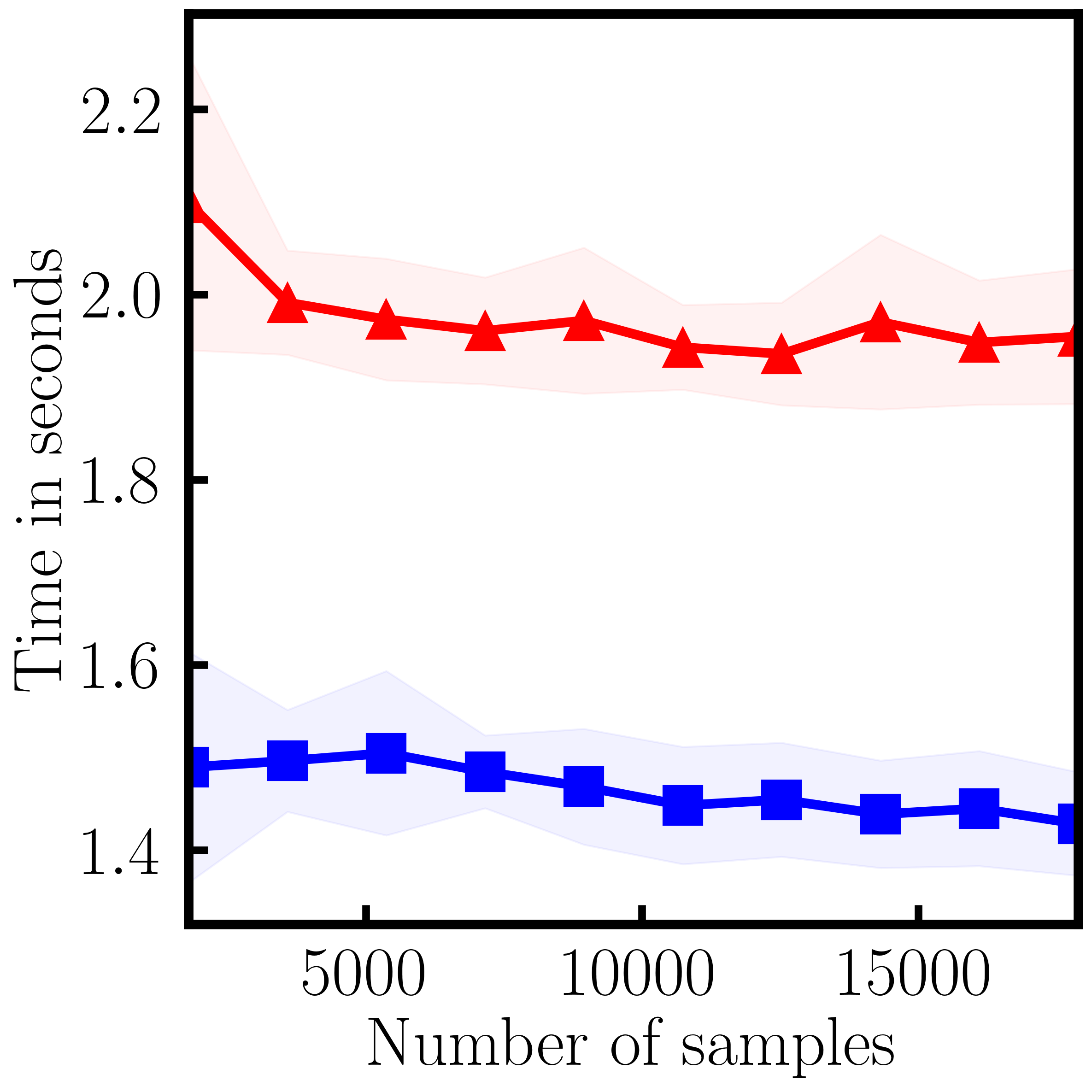}
        \caption{Htru.}
        \label{fig:pcg_vs_bpcg_htru}
    \end{subfigure}
    &
    \begin{subfigure}{.22\textwidth}
    \centering
        \includegraphics[width=1\textwidth]{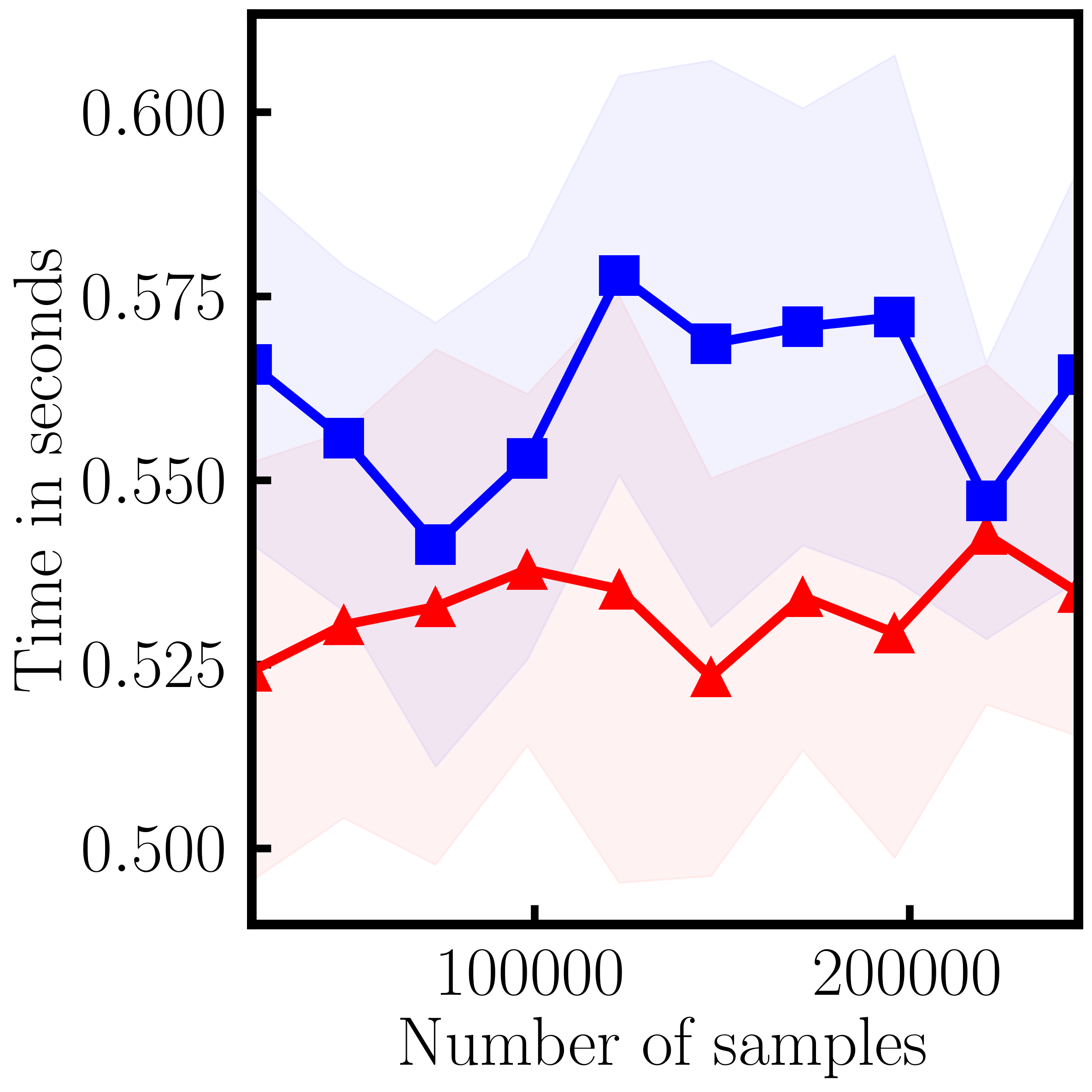}
        \caption{Skin.}
        \label{fig:pcg_vs_bpcg_skin}
    \end{subfigure} 
    & 
    \begin{subfigure}{.22\textwidth}
    \centering
        \includegraphics[width=1\textwidth]{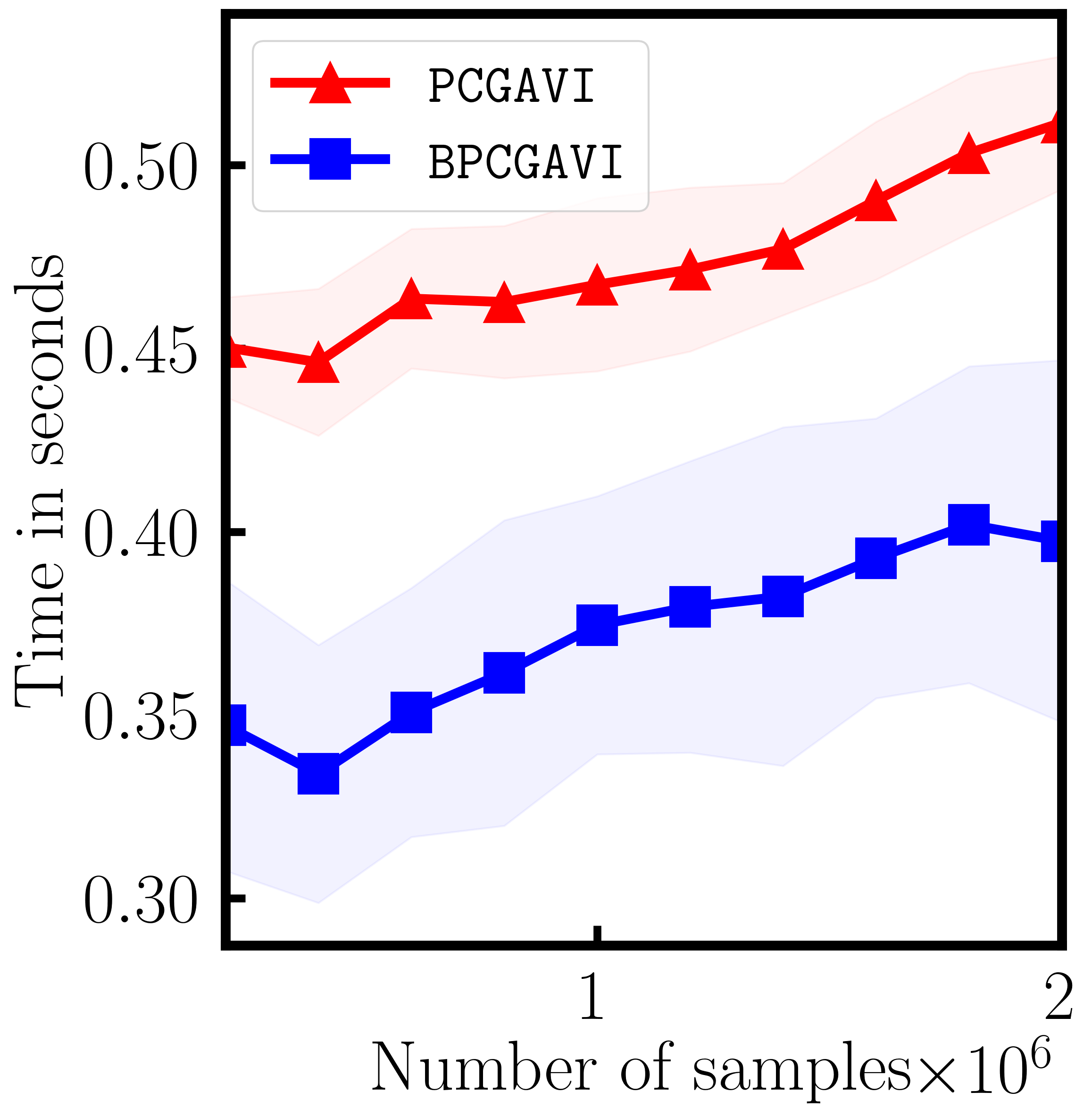}
        \caption{Synthetic.}
        \label{fig:pcg_vs_bpcg_synthetic}
    \end{subfigure} 
\end{tabular}
\caption{Training time comparisons with fixed $\psi = 0.005$, averaged over ten random runs with shaded standard deviations. On all data sets except skin, \bpcg{}\avi{} is faster than \pcg{}\avi{}.
}\label{fig:pcg_vs_bpcg}
\end{figure}

\citet{wirth2022conditional} solved the optimization problem in Line~\ref{alg:oavi_oracle} of \oavi{} with the pairwise conditional gradients algorithm (\pcg{}). In this section, we replace \pcg{} with the blended pairwise conditional gradients algorithm (\bpcg{}) \citep{tsuji2022pairwise}, yielding an exponential improvement in time complexity of \oavi{} in the number of features $n$ in the data set $X = \{\xx_1, \ldots, \xx_m\} \subseteq [0,1]^n$.

Let $P \subseteq \R^\ell$, $\ell\in \N$, be a polytope of \emph{diameter} $\delta > 0$ and \emph{pyramidal width} $\omega > 0$, see Appendix~\ref{sec:pyramidal_width_l1_ball} for details, and let $f\colon \R^\ell \to \R$ be a $\mu$-strongly convex and $L$-smooth function. \pcg{} and \bpcg{} address constrained convex optimization problems of the form
\begin{align}\label{eq:opt_problem}
    \yy^* \in \argmin_{\yy\in P} f(\yy).
\end{align}
With $\vertices(P)$ denoting the set of vertices of $P$, the convergence rate of \pcg{} for solving \eqref{eq:opt_problem} is 
$f(\yy_T) - f(\yy^*) \leq (f(\yy_0) - f(\yy^*)) e^{-\rho_\pcg{} T}$,
where $\yy_0\in P$ is the starting point, $\yy^*$ is the optimal solution to \eqref{eq:opt_problem}, and $\rho_\pcg{}=\mu \omega^2 / (4 L \delta^2 (3|\vertices (P)|! +1))$ \citep{lacoste2015global}.
The dependence of \pcg{}'s time complexity on $|\vertices(P)|!$ is due to \emph{swap steps}, which shift weight from the away to the Frank-Wolfe vertex but do not lead to a lot of progress. Since the problem dimension is $|\cO|$, which can be of a similar order of magnitude as $|\cG| + |\cO|$, \pcg{} can require a number of iterations that is exponential in $|\cG| + |\cO|$ to reach $\eps$-accuracy.
\bpcg{} does not perform swap steps and its convergence rate is
$f(\yy_T) - f(\yy^*) \leq (f(\yy_0) - f(\yy^*)) e^{-\rho_\bpcg{} T}$,
where $\rho_\bpcg{}=1/2 \min \left\{1/2, \mu \omega^2/(4 L \delta^2)\right\}$ \citep{tsuji2022pairwise}. Thus, to reach $\eps$-accuracy, \bpcg{} requires a number of iterations that is only polynomial in $|\cG| + |\cO|$.
For the optimization problem in Line~\ref{alg:oavi_oracle} of \oavi{}, $A = \cO(X) \in \R^{m \times \ell}$ and $\mu$ and $L$ are the smallest and largest eigenvalues of matrix $\frac{2}{m}A^\intercal A \in \R^{\ell \times \ell}$, respectively. Since the pyramidal width $\omega$ of the $\ell_1$-ball of radius $\tau - 1$ is at least $(\tau - 1) / \sqrt{\ell}$ (Lemma~\ref{lemma:pyramidal_width_l1_ball}),
ignoring dependencies on $\mu$, $L$, $\eps$, and $\tau$, and updating $A^\intercal A$ and $A^\intercal \bb$ as explained in the proof of Theorem~\ref{thm:inverse_hessian_boosting}, the computational complexity of \bpcg{}\avi{} (\oavi{} with solver \bpcg{}) is as summarized below.

\begin{corollary}
[Complexity]
\label{cor:computational_complexity_bpcgavi}
Let $X = \{\xx_1, \ldots, \xx_m\} \subseteq \R^n$, $\psi > 0$, $\tau \geq 2$, and $(\cG, \cO) = \bpcg{}\avi{}(X,\psi,\tau)$. In the real number model, the time and space complexities of \bpcg{}\avi{} are
$O((|\cG| + |\cO|)^2m + (|\cG| + |\cO|)^4)$ and $O((|\cG| + |\cO|)m + (|\cG| + |\cO|)^2)$, respectively.
\end{corollary}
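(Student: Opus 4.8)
The plan is to combine the number-of-samples-agnostic bound on $|\cG| + |\cO|$ from Theorem~\ref{thm:better_bound_on_G_O} with the generic complexity statement of Theorem~\ref{thm:computational_complexity}, after establishing the cost of a single call to the \bpcg{} oracle. Concretely, I would proceed as follows. First, I would recall from Theorem~\ref{thm:computational_complexity} that the overall time and space complexities of \oavi{} decompose into a part that depends only on $N := |\cG| + |\cO|$ (namely $O(N^2)$ time and $O(Nm)$ space for bookkeeping, border computation, and evaluation-matrix maintenance) plus $N$ calls to the convex-optimization oracle, each costing $T_\cvxoracle{}$ time and $S_\cvxoracle{}$ space. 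Although Theorem~\ref{thm:computational_complexity} is stated for $\tau = \infty$, the same accounting holds for finite $\tau$ up to the oracle cost, since the only change is the feasible region $P$ passed to the solver; this is the point I would need to state carefully (or cite from \citet{wirth2022conditional}).

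Second, I would bound $T_\cvxoracle{}$ and $S_\cvxoracle{}$ for \bpcg{}. The problem in Line~\ref{alg:oavi_oracle} has dimension $\ell = |\cO| \le N$, objective $f(\yy) = \frac{1}{m}\|A\yy + \bb\|_2^2$ with $A \in \R^{m\times \ell}$, and feasible set the scaled $\ell_1$-ball $P$. Using the convergence rate $f(\yy_T) - f(\yy^*) \le (f(\yy_0) - f(\yy^*))\, e^{-\rho_\bpcg{} T}$ with $\rho_\bpcg{} = \tfrac12\min\{\tfrac12,\, \mu\omega^2/(4L\delta^2)\}$, together with $\omega \ge (\tau-1)/\sqrt{\ell}$ from Lemma~\ref{lemma:pyramidal_width_l1_ball} and $\delta = O(\tau)$ for the $\ell_1$-ball, the number of iterations to reach $\eps$-accuracy is $O(\ell \cdot \text{poly}(\mu,L,1/\eps,\tau)) = O(\ell)$ once we suppress the dependence on $\mu, L, \eps, \tau$ as stated. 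Each \bpcg{} iteration needs a gradient $\nabla f(\yy) = \frac{2}{m}(A^\intercal A\yy + A^\intercal \bb)$; if one maintains and updates the $\ell\times\ell$ Gram matrix $A^\intercal A$ and the vector $A^\intercal\bb$ incrementally across the $N$ oracle calls (as in the proof of Theorem~\ref{thm:inverse_hessian_boosting}), the per-iteration cost is $O(\ell^2)$ rather than $O(\ell m)$, plus the $O(\ell)$ cost of the linear-minimization oracle over the $\ell_1$-ball. Hence $T_\cvxoracle{} = O(\ell \cdot \ell^2) = O(N^3)$ and $S_\cvxoracle{} = O(\ell^2) = O(N^2)$, with an initial $O(\ell m)$ term for forming the Gram updates, which over all calls contributes $O(N \cdot \ell m) = O(N^2 m)$ time.

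Third, I would assemble the totals. Time: $O(N^2)$ (Theorem~\ref{thm:computational_complexity} bookkeeping) $+\, O(N^2 m)$ (maintaining $A^\intercal A$, $A^\intercal\bb$ across calls) $+\, O(N \cdot T_\cvxoracle{}) = O(N \cdot N^3) = O(N^4)$, giving $O(N^2 m + N^4)$. Space: $O(Nm)$ (evaluation matrices) $+\, O(S_\cvxoracle{}) = O(N^2)$, giving $O(Nm + N^2)$. Substituting $N = |\cG| + |\cO|$ yields exactly the stated $O((|\cG|+|\cO|)^2 m + (|\cG|+|\cO|)^4)$ time and $O((|\cG|+|\cO|)m + (|\cG|+|\cO|)^2)$ space. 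I would finish by invoking Theorem~\ref{thm:better_bound_on_G_O} (valid since $\psi > 0$) only if a bound purely in $m$ and $n$ is desired, though the corollary as stated is already in terms of $|\cG|+|\cO|$.

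The main obstacle is not any single estimate but the careful tracking of which quantities are being suppressed: the $O(\ell)$ iteration count hides a factor involving the condition number $L/\mu$ and $\log(1/\eps)$, and one must make sure the "exponential improvement over \pcg{}" claim is faithfully reflected — i.e., that the $|\vertices(P)|!$ factor present in $\rho_\pcg{}$ (which is $\exp(\Omega(\ell\log\ell))$ since the $\ell_1$-ball has $2\ell$ vertices) is genuinely absent from $\rho_\bpcg{}$. The secondary subtlety is justifying the incremental Gram-matrix maintenance: one must argue that when a term is added to $\cO$ (Line~\ref{alg:oavi_add_O}), the matrix $A^\intercal A$ grows by one row and column computable in $O(\ell m)$ time, and that $\bb = u_i(X)$ for the next border term is likewise computable within the claimed budget — this is where the forward reference to the proof of Theorem~\ref{thm:inverse_hessian_boosting} does the real work, so I would make sure that proof is self-contained enough to be cited here.
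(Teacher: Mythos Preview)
Your proposal is correct and follows essentially the same approach as the paper: the paper derives the corollary directly from the preceding discussion by combining the generic \oavi{} complexity (Theorem~\ref{thm:computational_complexity}), the \bpcg{} convergence rate with the pyramidal-width bound of Lemma~\ref{lemma:pyramidal_width_l1_ball} (yielding $O(\ell)$ iterations after suppressing $\mu,L,\eps,\tau$), and the incremental maintenance of $A^\intercal A$ and $A^\intercal\bb$ as in the proof of Theorem~\ref{thm:inverse_hessian_boosting}. Your write-up is in fact more explicit than the paper's, which leaves the assembly of the $O(N^2 m + N^4)$ and $O(Nm + N^2)$ totals to the reader; your closing remark that Theorem~\ref{thm:better_bound_on_G_O} is not needed for the corollary as stated is also accurate.
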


The results in Figure~\ref{fig:pcg_vs_bpcg}, see Section~\ref{sec:speeding_up_cgavi} for details, show that replacing \pcg{} with \bpcg{} in \oavi{} often speeds up the training time of \oavi{}. For the data set skin in Figure~\ref{fig:pcg_vs_bpcg_skin}, see Table~\ref{table:data_sets} for an overview of the data sets, it is not fully understood why \bpcg{}\avi{} is slower than \pcg{}\avi{}.

\subsection{Inverse Hessian boosting (\ihb)}\label{sec:inverse_hessian_boosting}

\begin{figure}[t]
\centering
\begin{tabular}{c c c c}
\begin{subfigure}{.22\textwidth}
    \centering
        \includegraphics[width=1\textwidth]{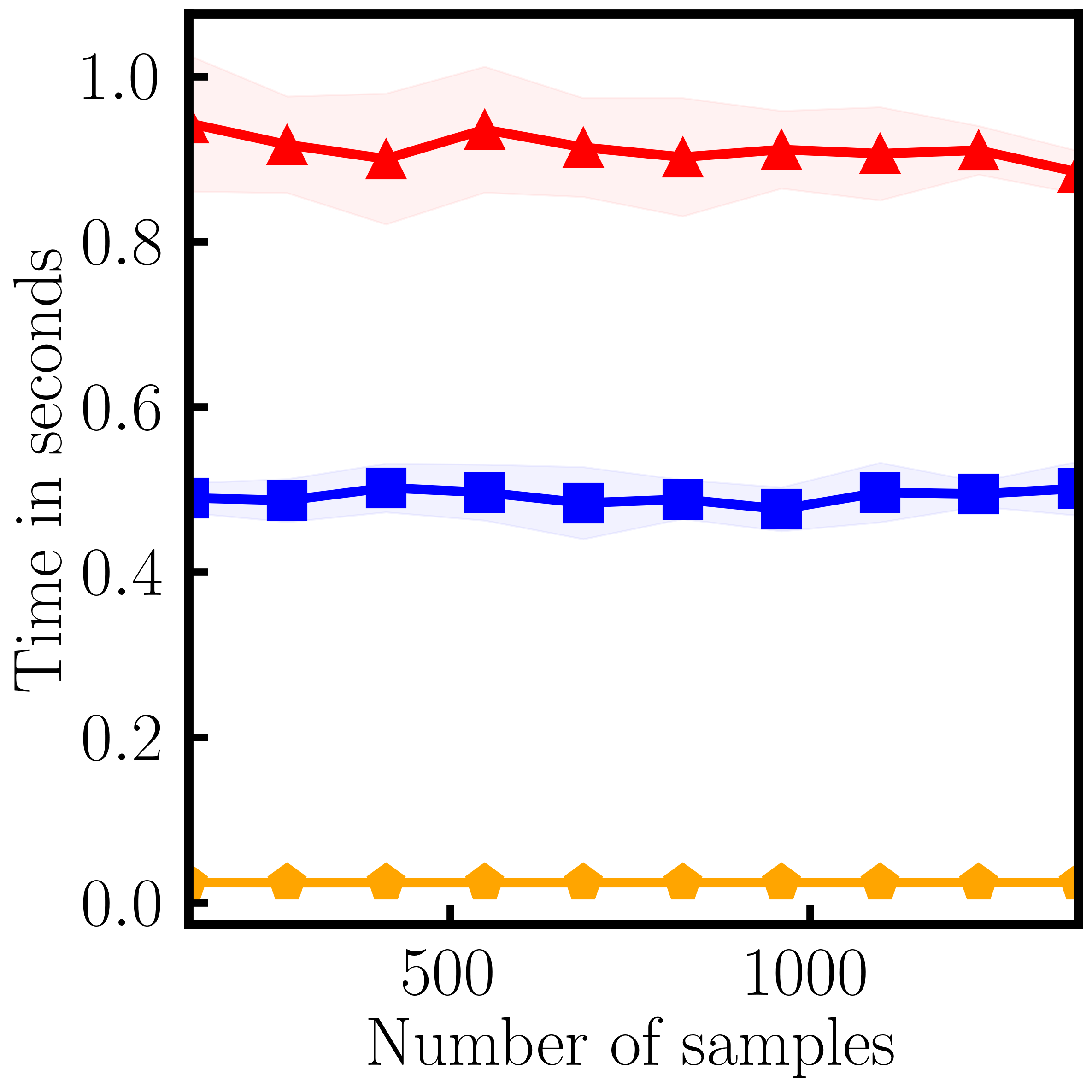}
        \caption{Bank.}
        \label{fig:ihb_htru}
    \end{subfigure}
    &
\begin{subfigure}{.22\textwidth}
    \centering
        \includegraphics[width=1\textwidth]{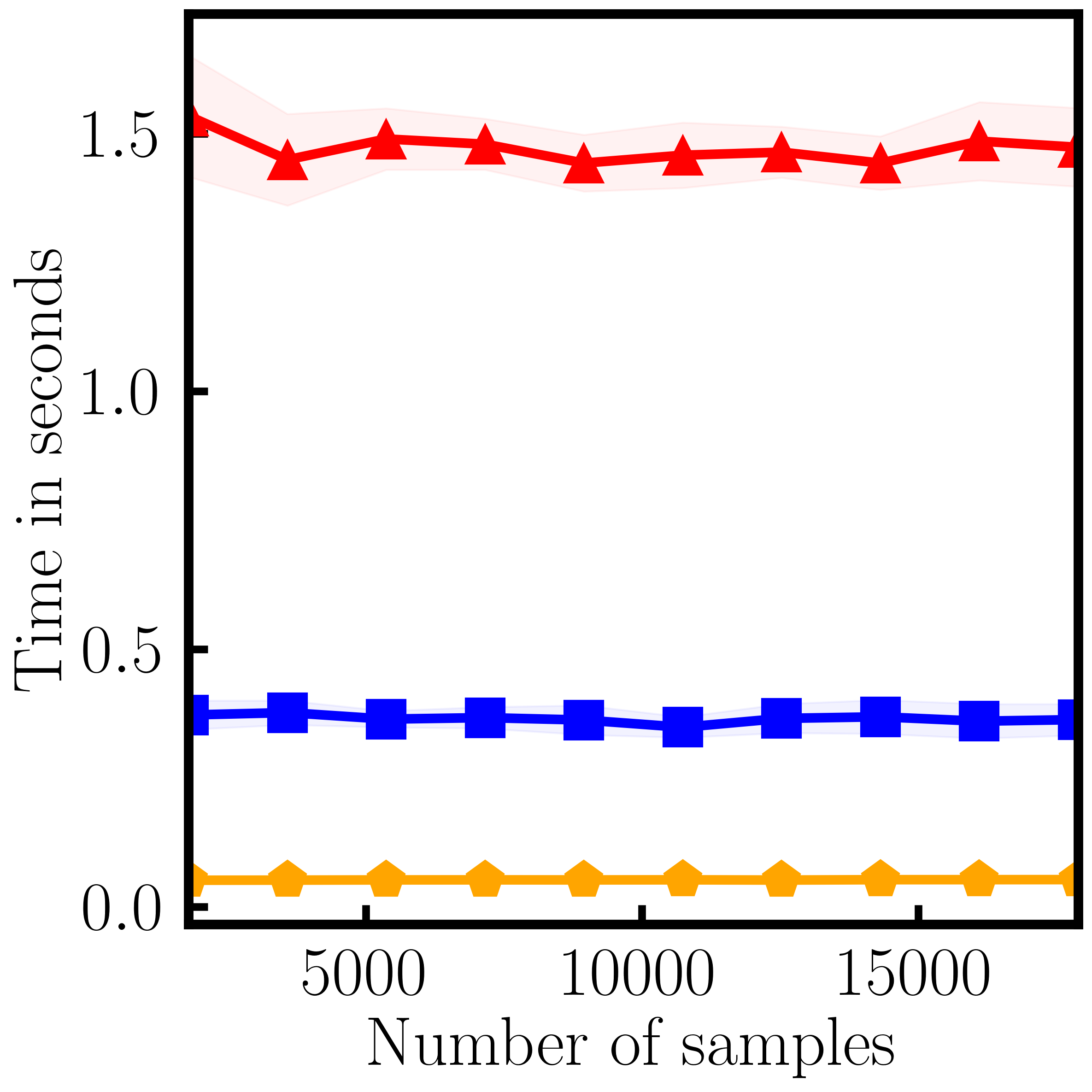}
        \caption{Htru.}
        \label{fig:ihb_bank}
    \end{subfigure}
    \begin{subfigure}{.22\textwidth}
    \centering
        \includegraphics[width=1\textwidth]{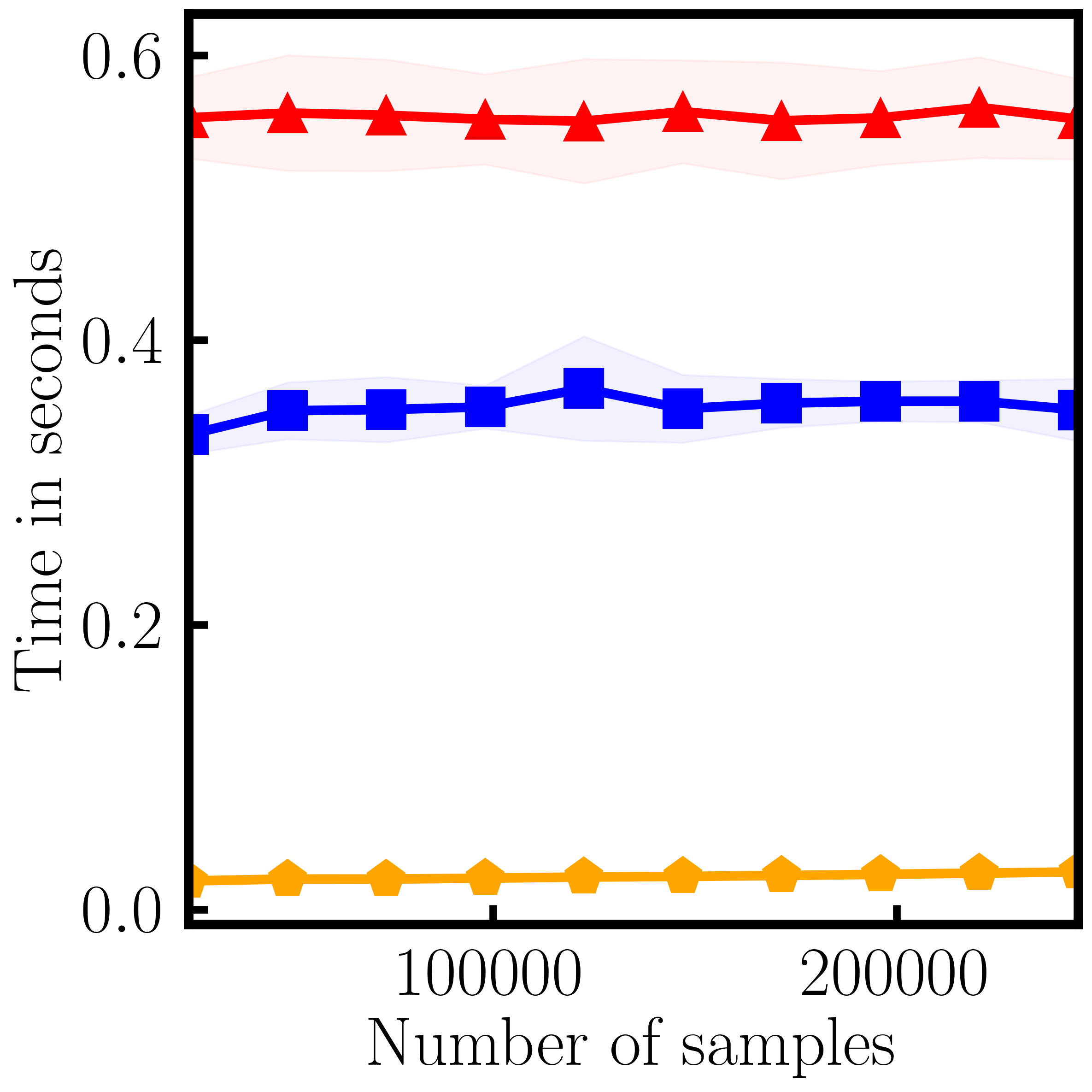}
        \caption{Skin.}
        \label{fig:ihb_skin}
    \end{subfigure} 
    & 
    \begin{subfigure}{.22\textwidth}
    \centering
        \includegraphics[width=1\textwidth]{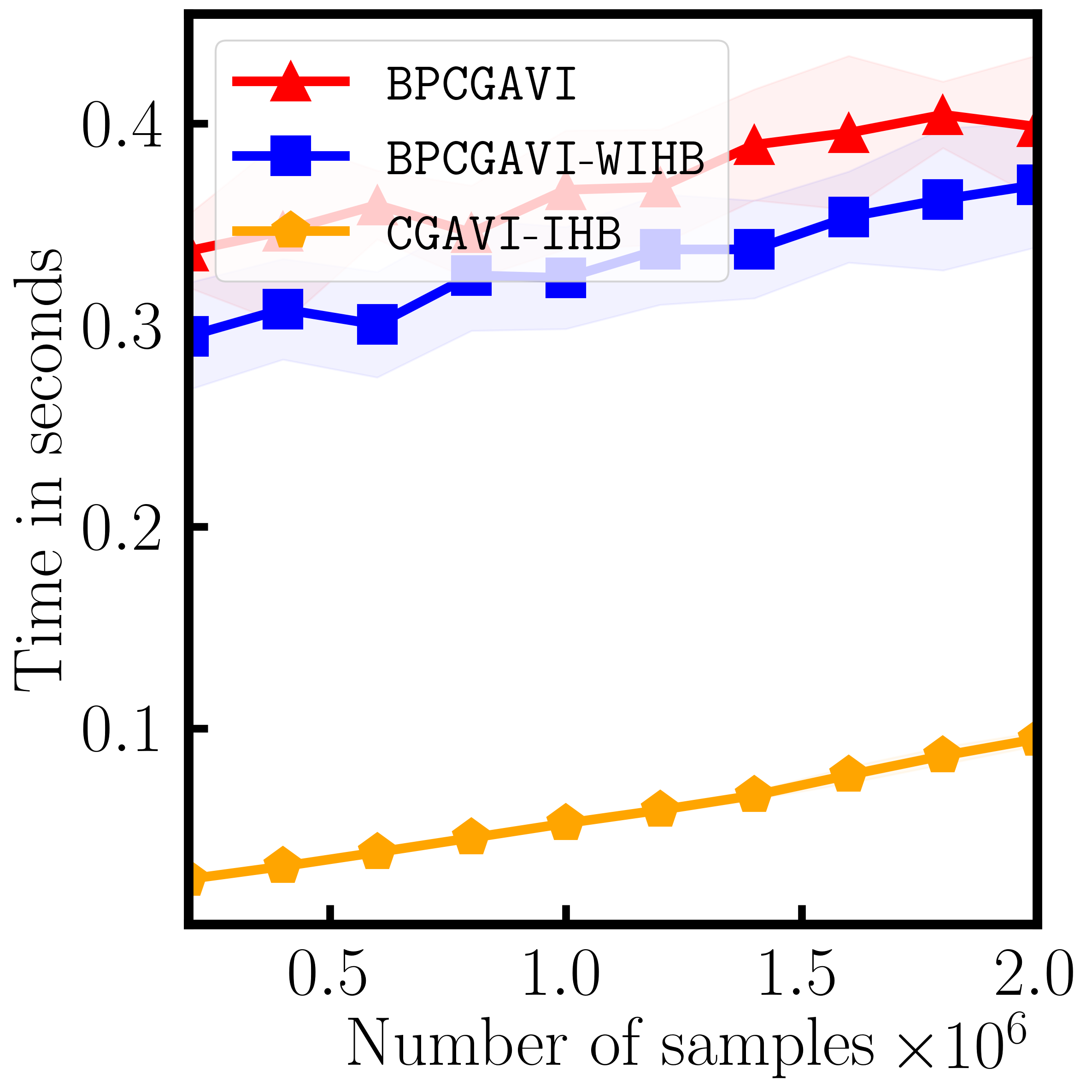}
        \caption{Synthetic.}
        \label{fig:ihb_synthetic}
    \end{subfigure} 
\end{tabular}
\caption{Training time comparisons with fixed $\psi = 0.005$, averaged over ten random runs with shaded standard deviations. \cgavi{}-\ihb{} is faster than \bpcg{}\avi{}-\wihb{}, which is faster than \bpcg{}\avi{}.
}\label{fig:ihb}
\end{figure}

We introduce inverse Hessian boosting (\ihb{}) to speed up the training time of \oavi{} by multiple orders of magnitudes by exploiting the structure of the optimization problems solved in \oavi{}. 

For ease of exposition, assume for now that $\tau = \infty$, in which case we would use \agd{} to solve the problem in Line~\ref{alg:oavi_oracle} of \oavi{}. Letting $\ell = |\cO|$, $A= \cO(X) \in \R^{m\times \ell}$, $\bb=u_i(X)\in\R^m$, and $f(\yy) =  \frac{1}{m}\|A\yy + \bb\|_2^2$, the optimization problem in Line~\ref{alg:oavi_oracle} of \oavi{} takes the form
$\cc \in \argmin_{\yy \in \R^\ell} f(\yy)$.
Then, the gradient and Hessian of $f$ at $\yy\in \R^\ell$ are $\nabla f(\yy) = \frac{2}{m} A^\intercal (A\yy + \bb) \in \R^\ell$ and $\nabla^2 f(\yy) = \frac{2}{m}A^\intercal A\in\R^{\ell \times \ell}$,
respectively. By construction, the columns of  $A = \cO(X)$ are linearly independent. Hence, $A^\intercal A\in \R^{\ell\times\ell}$ is positive definite and invertible, $f$ is strongly convex, and the optimal solution $\cc$ to the optimization problem in Line~\ref{alg:oavi_oracle} of \oavi{} is unique. Further, for $\yy \in \R^\ell$, we have
$\nabla f(\yy) = \zeros$
if and only if $\yy = \cc$.
Instead of using \agd{} to construct an $\eps$-accurate solution to the optimization problem in Line~\ref{alg:oavi_oracle} of \oavi{}, we could compute the optimal solution
$\cc = (A^\intercal A)^{-1}A^\intercal \bb \in \R^\ell$.
Since matrix inversions are numerically unstable, relying on them directly would make \oavi{} less robust, and approximately vanishing polynomials might not be correctly detected. Instead, we capitalize on the fact that the number of iterations of \agd{} to reach an $\eps$-minimizer depends on the Euclidean distance between the starting vector and the optimal solution.
\ihb{} refers to the procedure of passing
$\yy_0 = (A^\intercal A)^{-1} A^\intercal \bb\in\R^\ell$
to \agd{} as a starting vector. Then, \agd{} often reaches $\eps$-accuracy in one iteration. In case $(A^\intercal A)^{-1}$ is not computed correctly due to floating-point errors, \agd{} still guarantees an $\eps$-accurate solution to the optimization problem. Thus, \ihb{} can also be thought of as performing one iteration of \emph{Newton's method} starting with iterate $\zeros$, see, for example, \citet{galantai2000theory}, and passing the resulting vector as a starting iterate to \agd{}. We stress that the dimension of $A^\intercal A\in \R^{\ell\times\ell}$ is number-of-samples-agnostic, see Theorem~\ref{thm:better_bound_on_G_O}.

\ihb{} also works for $\tau < \infty$, in which case we use \cg{} variants to address the optimization problem in Line~\ref{alg:oavi_oracle} of \oavi{} that takes the form
$\cc \in \argmin_{\yy \in P} f(\yy)$,
where $P=\{\yy\in \R^\ell \mid \|\yy\|_1 \leq \tau - 1\}$. In the problematic case
\begin{align}\label{eq:violation_of_assumptions}\tag{INF}
     \|\yy_0\|_1 = \|(A^\intercal A)^{-1} A^\intercal \bb\|_1 > \tau - 1,
\end{align}
polynomial $g = \sum_{j=1}^{\ell} c_j t_j + u_i$ constructed in Line~\ref{alg:oavi_g} of \oavi{} might not vanish approximately even though there exists $h\in \cP$ with $\lt(h) = u_i$, $\ltc(h) = 1$, non-leading terms only in $\cO$, and $\|h\|_1 > \tau$ that vanishes exactly over $X$. Thus, $\mse(h,X) = 0 \leq \psi < \mse(g,X)$ and \oavi{} does not append $g$ to $\cG$ and instead updates $\cO \gets (\cO \cup \{u_i\})_\sig$ and $A \gets (A, \bb) \in \R^{m \times (\ell + 1)}$. Since $\mse(h,X) = 0$, we just appended a linearly dependent column to $A$, making $A$ rank-deficient, $(A^\intercal A)^{-1}$ ill-defined, and \ihb{} no longer applicable.
To address this problem, we could select $\tau$ adaptively, but this would invalidate the learning guarantees of \oavi{} that rely on $\tau$ being a constant. In practice, we fix $\tau \geq 2$ and stop using \ihb{} as soon as \eqref{eq:violation_of_assumptions} holds for the first time, preserving the generalization bounds of \oavi{}.
Through careful updates of $(A^\intercal A)^{-1}$, the discussion in Appendix~\ref{sec:ihb_computational_complexity} implies the following complexity result for \cgavi{}-\ihb{} (\cgavi{} with \ihb{}).
\begin{corollary}
[Complexity]
\label{cor:computational_complexity_oavi_ihb}
Let $X = \{\xx_1, \ldots, \xx_m\} \subseteq \R^n$, $\psi \geq 0$, $\tau\geq 2$ large enough such that \eqref{eq:violation_of_assumptions} never holds, and $(\cG, \cO) = \cgavi{}\text{-}\ihb{}(X,\psi,\tau)$. Assuming \cg{} terminates after a constant number of iterations due to \ihb{}, the time and space complexities of \cgavi{}-\ihb{} are $O((|\cG| + |\cO|)^2m + (|\cG| + |\cO|)^3)$ and $O((|\cG| + |\cO|)m + (|\cG| + |\cO|)^2)$, respectively.
\end{corollary}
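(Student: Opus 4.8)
The plan is to reuse the cost accounting of Theorem~\ref{thm:computational_complexity} and isolate only what changes when the oracle of Line~\ref{alg:oavi_oracle} is \cg{} warm-started by \ihb{}. Write $N = |\cG| + |\cO|$. Regardless of $\tau$, the outer structure of \oavi{} is unchanged: it makes $O(N)$ calls to the convex oracle (each processed border term enlarges either $\cG$ or $\cO$ by one, and an empty border terminates the algorithm), every subproblem has dimension $\ell = |\cO| \leq N$, and the non-oracle bookkeeping is $O(N^2)$ as in Theorem~\ref{thm:computational_complexity}. So it suffices to bound the amortized cost of one oracle call when $A = \cO(X)$, the Gram matrix $A^\intercal A$, and its inverse $(A^\intercal A)^{-1}$ are carried along and updated incrementally, plus the extra storage.

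First I would confirm that \ihb{} remains applicable throughout and that a call is indeed $O(1)$ iterations. Because $\tau$ is chosen so that \eqref{eq:violation_of_assumptions} never holds, the \ihb{} warm start $\yy_0 = (A^\intercal A)^{-1}A^\intercal\bb$ of Section~\ref{sec:inverse_hessian_boosting} is always feasible for $P = \{\yy\in\R^\ell : \norm{\yy}_1 \leq \tau - 1\}$; since $f(\yy) = \frac{1}{m}\norm{A\yy + \bb}_2^2$ is strongly convex (the columns of $A$ are linearly independent) with $\yy_0$ as its unique unconstrained minimizer, $\yy_0$ is also the unique solution $\cc$ of the constrained problem in Line~\ref{alg:oavi_oracle}. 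Hence \cg{} starts at the optimum in exact arithmetic and, by the standing hypothesis, stops after $O(1)$ iterations; under floating-point errors the same holds because \cg{} converges linearly from the near-optimal $\yy_0$ and reaches $\eps$-accuracy in a number of steps depending only on $\mu, L, \eps, \tau$, which we suppress. The same observation keeps the columns of $A = \cO(X)$ linearly independent as the run proceeds: if $u_i(X)$ lay in their span, $\yy_0$ would certify $\mse(g,X) = 0 \leq \psi$ and $u_i$ would be appended to $\cG$ rather than to $\cO$ in Line~\ref{alg:oavi_add_O}; so $(A^\intercal A)^{-1}$ and all Schur complements below remain well defined.

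Next I would bound a single oracle call. The new right-hand side $\bb = u_i(X)$ is obtained as an entrywise (Hadamard) product $\bb = t(X)\odot v(X)$, where $u_i = t\cdot v$ with $t\in\cT_1$ (read off the data) and $v\in\cO$ (so $v(X)$ is a stored column of $A$); this is $O(m)$. Forming $A^\intercal\bb\in\R^\ell$ costs $O(\ell m)$, and computing $\yy_0$ from $(A^\intercal A)^{-1}$ and $A^\intercal\bb$ is a matrix--vector product, $O(\ell^2)$. Each \cg{} iteration evaluates $\nabla f(\yy) = \frac{2}{m}(A^\intercal A\,\yy + A^\intercal\bb)$ in $O(\ell^2)$ using the stored $A^\intercal A$ and the just-computed $A^\intercal\bb$, plus one linear minimization over the $\ell_1$-ball in $O(\ell)$; with $O(1)$ iterations this is $O(\ell^2)$. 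The test $\mse(g,X)\leq\psi$ in Line~\ref{alg:oavi_evaluate_mse} is free, since $\mse(g,X) = f(\cc)$ is returned by the solver. If $u_i$ is appended to $\cO$, the new Gram matrix borders $A^\intercal A$ with the already-available $A^\intercal\bb$ and the scalar $\bb^\intercal\bb$ ($O(m)$), its inverse follows from $(A^\intercal A)^{-1}$, $A^\intercal\bb$, and the Schur-complement scalar by the bordered-inverse formula in $O(\ell^2)$, and appending the column $\bb$ to $A$ is $O(m)$. Thus one call costs $O(\ell m + \ell^2)$, and summing over $O(N)$ calls with $\ell\leq N$ (the $O(N^2)$ bookkeeping is dominated) gives time $O(N^2 m + N^3)$ in the real number model. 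For space, $A = \cO(X)$ needs $O(Nm)$, $A^\intercal A$ and $(A^\intercal A)^{-1}$ need $O(N^2)$, and the at most $N$ coefficient vectors of the polynomials in $\cG$ (each of length $\leq N$) need $O(N^2)$, for a total of $O(Nm + N^2)$. With $N = |\cG|+|\cO|$ these are the claimed bounds, and the missing details are exactly those deferred to Appendix~\ref{sec:ihb_computational_complexity}.

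The main obstacle is the $O(1)$-iterations hypothesis itself: it is an assumption rather than a theorem, and making it watertight is where the work lies — one must argue feasibility of $\yy_0$ (which is the precise role of "$\tau$ large enough that \eqref{eq:violation_of_assumptions} never holds") and that the inverse-Hessian data is used only as a warm start and never trusted, so that correctness of \oavi{} is unaffected by floating-point errors in the updates, exactly as in Section~\ref{sec:inverse_hessian_boosting}. The remaining ingredients — the $O(\ell^2)$ bordered-inverse update, the per-iteration gradient via the maintained Gram matrix, and the amortization over the run — are routine.
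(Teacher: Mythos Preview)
Your proposal is correct and follows essentially the same approach as the paper: both arguments plug a per-call cost of $O(\ell m + \ell^2)$ for the maintained $A$, $A^\intercal A$, and $(A^\intercal A)^{-1}$ (via a bordered-inverse/Sherman--Morrison update) into the $O(|\cG|+|\cO|)$-call framework of Theorem~\ref{thm:computational_complexity}, and both justify well-definedness of the update by observing that if $\bb = u_i(X)$ lies in the column space of $A$ then (using that \eqref{eq:violation_of_assumptions} never holds) a vanishing generator is found and no column is appended to $A$. Your linear-independence phrasing is equivalent to the paper's explicit check of the two hypotheses of Theorem~\ref{thm:inverse_hessian_boosting}, and your additional per-step details (Hadamard evaluation of $\bb$, $O(\ell^2)$ gradient via the stored Gram matrix, $\mse(g,X)=f(\cc)$) are refinements the paper leaves implicit.
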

In Appendix~\ref{sec:wihb}, we introduce \emph{weak inverse Hessian boosting} (\wihb{}), a variant of \ihb{} that speeds up \cg{}\avi{} variants while preserving sparsity-inducing properties.
In Figure~\ref{fig:ihb}, see Section~\ref{sec:speeding_up_cgavi} for details, we observe that \cgavi{}-\ihb{} is faster than \bpcg{}\avi{}-\wihb{}, which is faster than \bpcg{}\avi{}. In Figure~\ref{fig:times_algorithms}, see Section~\ref{sec:experiment_training_time} for details, we compare the training times of \oavi{}-\ihb{} to \abm{} and \vca{}. In Figure~\ref{fig:times_algorithms_synthetic}, we observe that \oavi{}-\ihb{}'s training time scales better than \abm{}'s and \vca{}'s.

\section{Numerical experiments}\label{sec:numerical_experiments}

{\begin{table*}[t]
\caption{Overview of data sets. All data sets are binary classification data sets and are retrieved from the UCI Machine Learning Repository \citep{dua2017uci} and additional references are provided.
}\label{table:data_sets}
\centering
\begin{tabular}{|l|l|r|r|}
\hline
Data set & Full name &  \# of samples & \# of features  \\
\hline
bank &  banknote authentication & 1,372 & 4 \\
credit &  default of credit cards \citep{yeh2009comparisons} & 30,000 & 22  \\
htru & HTRU2 \citep{lyon2016fifty} &  17,898 & 8  \\
skin & skin \citep{bhatt2010skin} & 245,057  &  3  \\
spam & spambase &  4,601 & 57 \\
synthetic & synthetic, see Appendix~\ref{sec:synthetic_data_set} & 2,000,000 & 3 \\ 
\hline
\end{tabular}
\end{table*}}

{\begin{table*}[t]
\caption{Hyperparameter ranges for numerical experiments.
}\label{table:hps}
\centering
\vspace{0.1cm}
\begin{tabular}{|l|r|}
\hline
Hyperparameters & Values \\
\hline
$\psi$ & 0.1, 0.05, 0.01, 0.005, 0.001, 0.0005, 0.0001\\
Regularization coefficient for \svm{}s & 0.1, 1, 10\\
Degree for polynomial kernel \svm{} & 1, 2, 3, 4\\
\hline
\end{tabular}
\end{table*}}

Unless noted otherwise, the setup for the numerical experiments applies to all experiments in the paper.
Experiments are implemented in \textsc{Python} and performed on an Nvidia GeForce RTX 3080 GPU with 10GB RAM and an Intel Core i7 11700K 8x CPU at 3.60GHz with 64 GB RAM. Our code is publicly available on 
\href{https://github.com/ZIB-IOL/avi_at_scale/releases/tag/v1.0.0}{GitHub}.

We implement \oavi{} as in Algorithm~\ref{algorithm:oavi} with convex solvers \cg{}, \pcg{}, \bpcg{}, and \agd{} and refer to the resulting algorithms as \cg{}\avi{}, \pcg{}\avi{}, \bpcg{}\avi{}, and \agd{}\avi{}, respectively.
Solvers are run for up to 10,000 iterations. For the \cg{} variants, we set $\tau = 1,000$. The \cg{} variants are run up to accuracy $\eps = 0.01 \cdot \psi$ and terminated early when less than $0.0001 \cdot \psi$ progress is made in the difference between function values, when the coefficient vector of a generator is constructed, or if we have a guarantee that no coefficient vector of a generator can be constructed.
\agd{} is terminated early if less than $0.0001 \cdot \psi$ progress is made in the difference between function values for 20 iterations in a row or the coefficient vector of a generator is constructed.
\oavi{} implemented with \wihb{} or \ihb{} is referred to as \oavi{}-\wihb{} or \oavi{}-\ihb{}, respectively.
We implement \abm{} as in \citet{limbeck2013computation} but instead of applying the \emph{singular value decomposition} (\svd{}) to the matrix corresponding to $A = \cO(X)$ in \oavi{}, we apply the \svd{} to $A^\intercal A$ in case this leads to a faster training time and we employ the border as in Definition~\ref{def:border}.
We implement \vca{} as in \citet{livni2013vanishing} but instead of applying the \svd{} to the matrix corresponding to $A = \cO(X)$ in \oavi{}, we apply the \svd{} to $A^\intercal A$ in case this leads to a faster training time.
We implement a polynomial kernel \svm{} with a one-versus-rest approach using the \textsc{scikit-learn} software package \citep{pedregosa2011scikit} and run the polynomial kernel \svm{} with $\ell_2$-regularization up to tolerance $10^{-3}$ or for up to 10,000 iterations.

We preprocess with \oavi{}, \abm{}, and \vca{} for a linear kernel \svm{} as in Section~\ref{sec:pipeline} and refer to the combined approaches as \oavi{}$^{*}$, \abm{}$^{*}$, and \vca{}$^{*}$, respectively. The linear kernel \svm{} is implemented using the \textsc{scikit-learn} software package and run with $\ell_1$-penalized squared hinge loss up to tolerance $10^{-4}$ or for up to 10,000 iterations.
For \oavi{}$^{*}$, \abm{}$^{*}$, and \vca{}$^{*}$, the hyperparameters are the vanishing tolerance $\psi$ and the $\ell_1$-regularization coefficient of the linear kernel \svm{}. For the polynomial kernel \svm{}, the hyperparameters are the degree and the $\ell_2$-regularization coefficient.
Table~\ref{table:data_sets} and~\ref{table:hps} contain overviews of the data sets and hyperparameter values, respectively.

\subsection{Experiment: speeding up \cgavi{}}\label{sec:speeding_up_cgavi}

We compare the training times of \pcg{}\avi{}, \bpcg{}\avi{}, \bpcg{}\avi{}-\wihb{}, and \cg{}\avi{}-\ihb{} on the data sets bank, htru, skin, and synthetic.

\paragraph{Setup.}
For a single run, we randomly split the data set into subsets of varying sizes. Then, for fixed $\psi = 0.005$, we run the generator-constructing algorithms on subsets of the full data set of varying sizes and plot the training times, which are the times required to run \pcg{}\avi{}, \bpcg{}\avi{}, \bpcg{}\avi{}-\wihb{}, and \cg{}\avi{}-\ihb{} once for each class. The results are averaged over ten random runs and standard deviations are shaded in Figures~\ref{fig:pcg_vs_bpcg} and~\ref{fig:ihb}.

\paragraph{Results.}
The results are presented in Figures~\ref{fig:pcg_vs_bpcg} and~\ref{fig:ihb}.
In Figure~\ref{fig:pcg_vs_bpcg}, we observe that the training times for \bpcg{}\avi{} are often shorter than for \pcg{}\avi{}, except for the skin data set. In Figure~\ref{fig:ihb}, we observe that \ihb{} not only leads to theoretically better training times but also speeds up training in practice: \cgavi{}-\ihb{} is always faster than \bpcg{}\avi{}. Furthermore, \wihb{} is indeed the best of both worlds: \bpcg{}\avi{}-\wihb{} is always faster than \bpcg{}\avi{} but preserves the sparsity-inducing properties of \bpcg{}. The latter can also be seen in Table~\ref{table:performance}, where \bpcg{}\avi{}-\wihb{}$^*$ is the only algorithmic approach that constructs sparse generators.

\subsection{Experiment: scalability comparison}\label{sec:experiment_training_time}

\begin{figure}[t]
\centering
\begin{tabular}{c c c c}
\begin{subfigure}{.22\textwidth}
    \centering
        \includegraphics[width=1\textwidth]{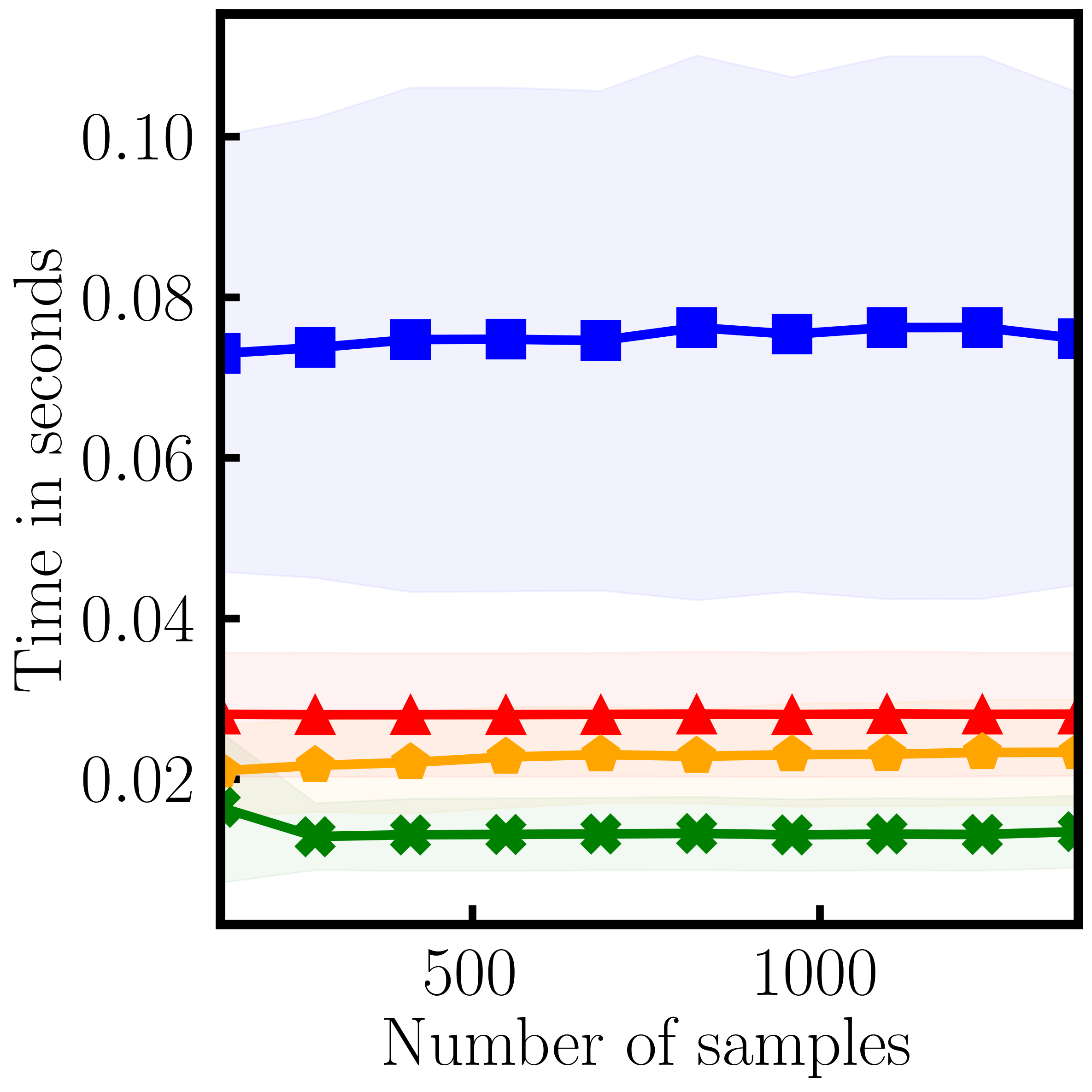}
        \caption{Bank.}
        \label{fig:times_algorithms_bank}
    \end{subfigure}
    &
\begin{subfigure}{.22\textwidth}
    \centering
        \includegraphics[width=1\textwidth]{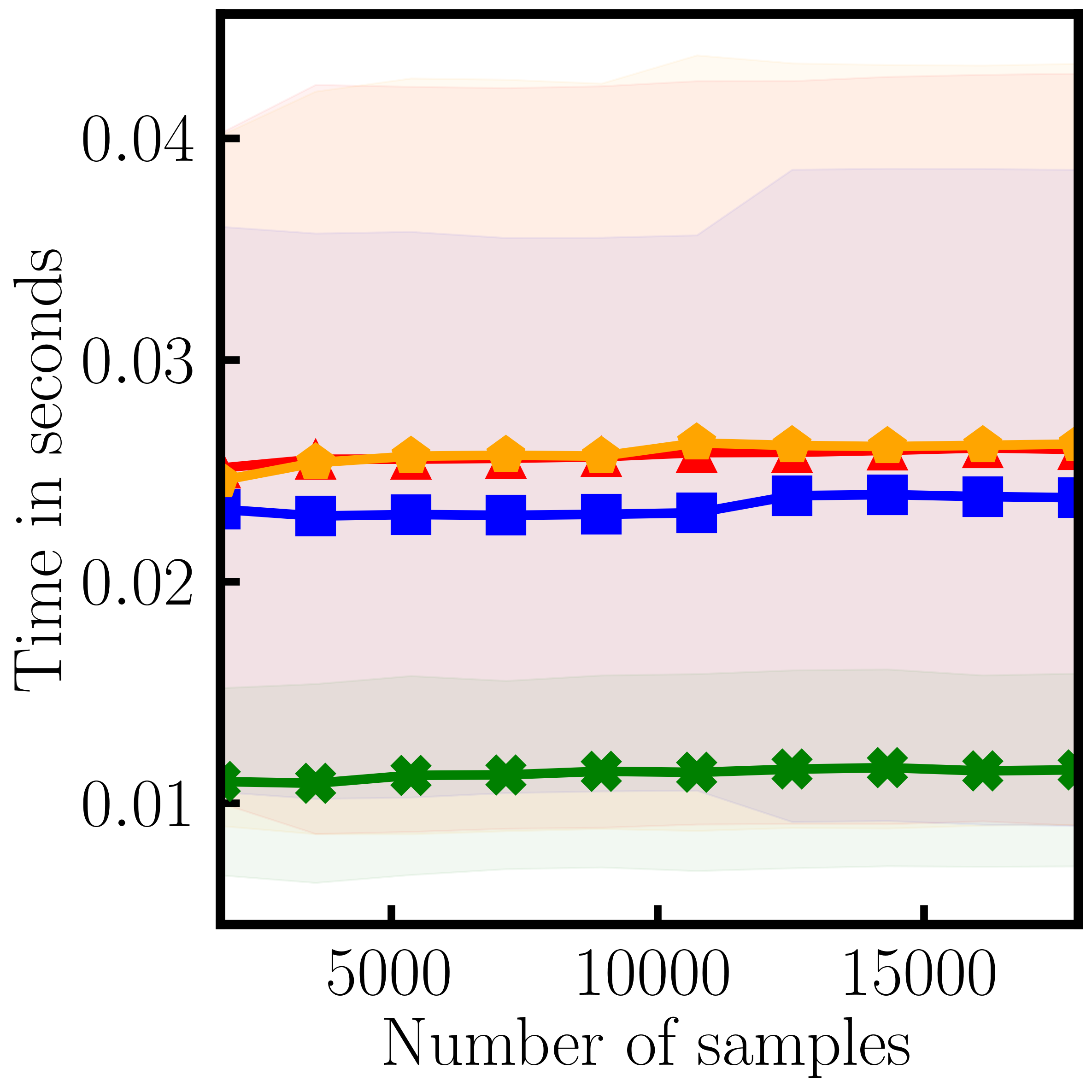}
        \caption{Htru.}
        \label{fig:times_algorithms_htru}
    \end{subfigure}
    & 
    \begin{subfigure}{.22\textwidth}
    \centering
        \includegraphics[width=1\textwidth]{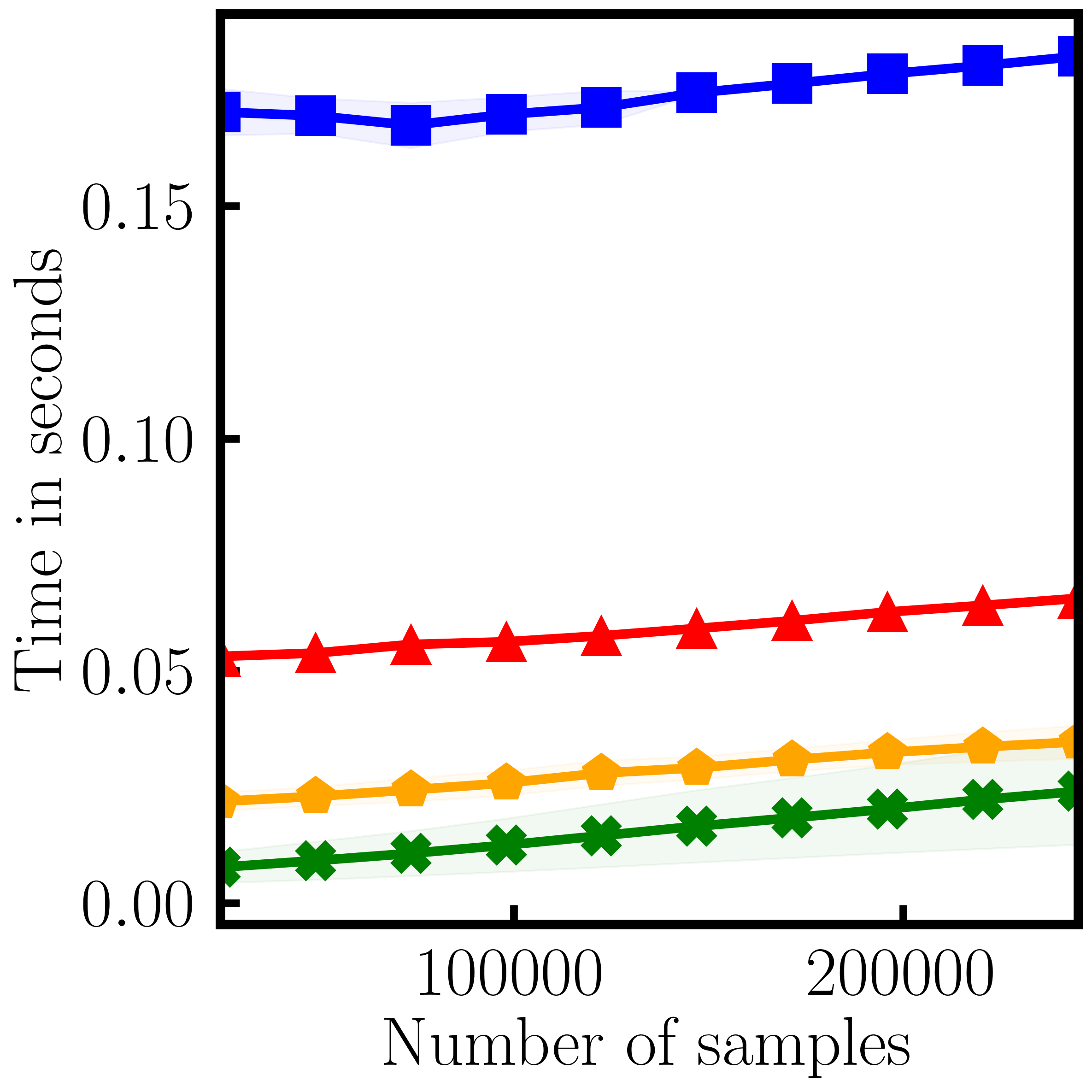}
        \caption{Skin.}
        \label{fig:time_algorithms_skin}
    \end{subfigure}
    & 
    \begin{subfigure}{.22\textwidth}
    \centering
        \includegraphics[width=1\textwidth]{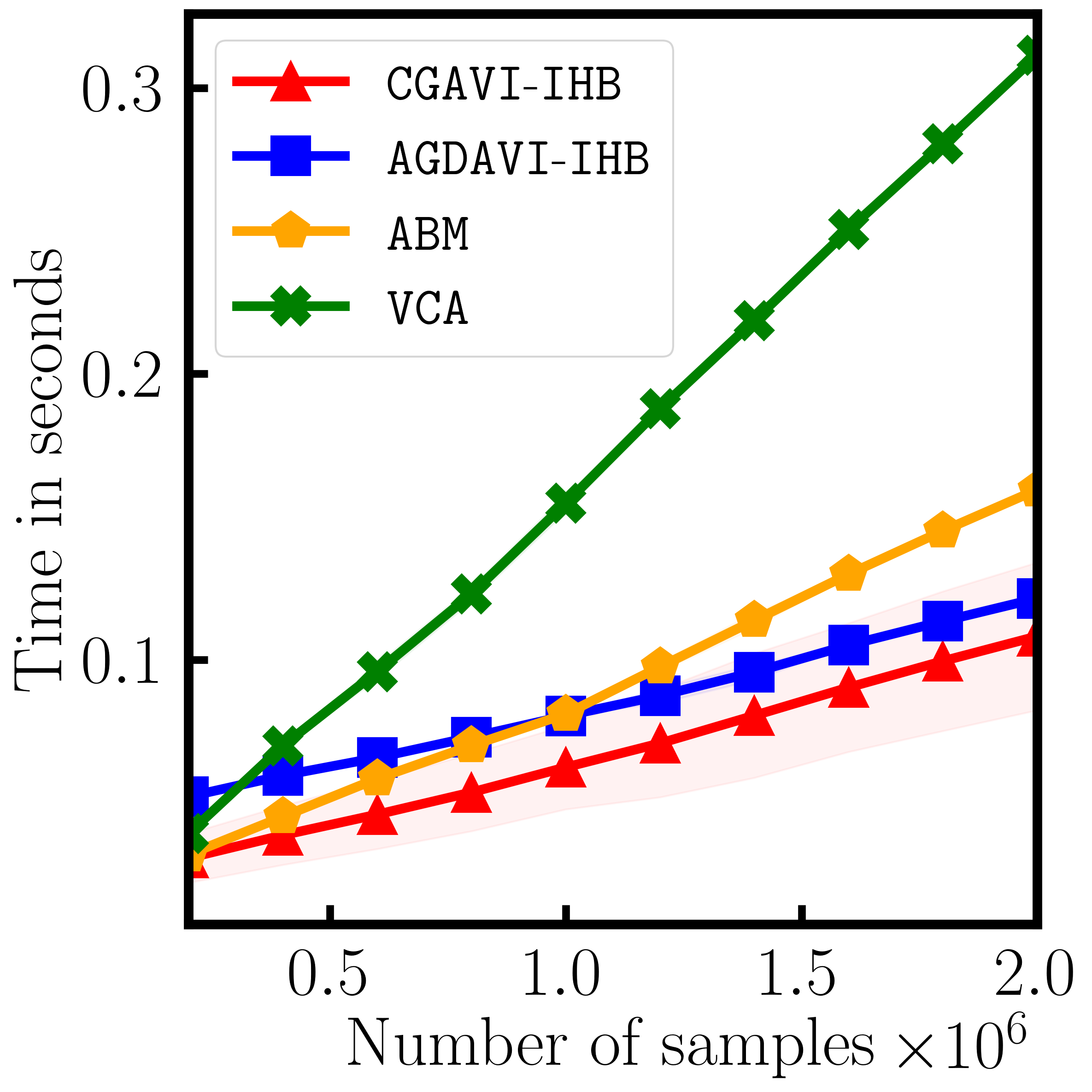}
        \caption{Synthetic.}
        \label{fig:times_algorithms_synthetic}
    \end{subfigure} 
\end{tabular}
\caption{Training time comparisons, averaged over ten random runs with shaded standard deviations. For small data sets, \abm{} and \vca{} are faster than \oavi{}, but for synthetic, the training times of \abm{} and \vca{} scale worse than \oavi{}'s. See Section~\ref{sec:experiment_training_time} for details.
}\label{fig:times_algorithms}
\end{figure}

We compare the training times of \cg{}\avi{}-\ihb{}, \agd{}-\ihb{}, \abm{}, and \vca{} on the data sets bank, htru, skin, and synthetic.

\paragraph{Setup.}
For a single run, on at most 10,000 samples of the data set, we tune the hyperparameters of generator-constructing algorithms \oavi{}, \abm{}, and \vca{} and a subsequently applied linear kernel \svm{} using threefold cross-validation. Then, using the determined hyperparameters, we run only the generator-constructing algorithms on subsets of the full data set of varying sizes and plot the training times, which are the times required to run \cg{}\avi{}-\ihb{}, \agd{}-\ihb{}, \abm{}, and \vca{} once for each class.  The results are averaged over ten random runs and standard deviations are shaded in Figure~\ref{fig:times_algorithms}.

\paragraph{Results.}
The results are presented in Figure~\ref{fig:times_algorithms}.
For small data sets, we observe that \abm{} and \vca{} are faster than \oavi{}. However, when the number of samples in the data set is large, as in the synthetic data set, \oavi{} can be trained faster than \abm{} and \vca{}. \agd{}\avi{}-\ihb{} is slower than \cgavi{}-\ihb{} because \agd{} cannot use the Frank-Wolfe gap as an early termination criterion to quickly decide that a solution close enough to the optimum is reached.

\subsection{Experiment: performance}\label{sec:experiment_performance}

\begin{table*}[t]
\caption{Numerical results averaged over ten random $60 \%$/$40 \%$ train/test partitions with best results in bold. For approaches other than \bpcg{}\avi{}-\wihb{}$^*$, $\spar{}(\cG) <0.01$ and we omit the results.
}\label{table:performance}
\centering
\vspace{0.1cm}
\begin{tabular}{|c|l|r|r|r|r|}
\hline
\multicolumn{2}{|c|}{\multirow{2}{*}{Algorithms}} & \multicolumn{4}{c|}{Data sets}  \\ \cline{3-6}
\multicolumn{2}{|c|}{}  & {credit} & {htru} & {skin} & {spam} \\
\hline
\multirow{6}{*}{\rotatebox[origin=c]{90}{Error}}
 & \cg{}\avi{}-\ihb{}$^*$  & $18.08$ & $2.09$ &  $0.23$ & $6.69$ \\
 & \agd{}\avi{}-\ihb{}$^*$  & $18.08$ & $2.09$ &  $0.23$ & $6.69$ \\
 & \bpcg{}\avi{}-\wihb{}$^*$  & \bm{$17.98$} &  \bm{$2.05$} & \bm{$0.22$} & $6.71$ \\
 & \abm{}$^*$  & $18.36$ & $2.09$ & $0.43$ & \bm{$6.67$} \\
 & \vca{}$^*$ & $19.85$ & $2.11$ &  $0.24$ & $7.15$ \\
 & \svm{} & $18.34$ & $2.08$ &  $2.25$ & $7.13$ \\
 \hline
\multirow{6}{*}{\rotatebox[origin=c]{90}{Time}}
 & \cg{}\avi{}-\ihb{}$^*$ & $1.3 \times 10^{2}$ & $2.3 \times 10^{1}$ & $1.0 \times 10^{2}$ & $8.3 \times 10^{1}$ \\
 & \agd{}\avi{}-\ihb{}$^*$  & $1.9 \times 10^{2}$ & $2.8 \times 10^{1}$  & $1.1 \times 10^{2}$ & $3.1 \times 10^{2}$ \\
 & \bpcg{}\avi{}-\wihb{}$^*$  & $3.7 \times 10^{3}$ & $8.0 \times 10^{2}$ & $5.6 \times 10^{2}$ & $4.2 \times 10^{2}$ \\
 & \abm{}$^*$  & $1.2 \times 10^{2}$ & $2.4 \times 10^{1}$  & $6. \times 10^{1}$ & $1.7 \times 10^{2}$ \\
 & \vca{}$^*$  & \bm{$2.4 \times 10^{1}$} & $6.2 \times 10^{0}$  & \bm{$1.4 \times 10^{1}$} & $6.5 \times 10^{1}$ \\
 & \svm{}  & $8.9 \times 10^{1}$ & \bm{$4.1 \times 10^{0}$}  & $7.1 \times 10^{2}$ & \bm{$2.2 \times 10^{0}$} \\
\hline
\hline
 \multirow{5}{*}{\rotatebox[origin=c]{90}{$|\cG|+|\cO|$}}
 & \cg{}\avi{}-\ihb{}$^*$ & $82.40$ & $27.90$ & $39.00$ & $786.60$ \\
 & \agd{}\avi{}-\ihb{}$^*$ & $82.40$ & $27.90$  & $39.00$ & $786.60$ \\
 & \bpcg{}\avi{}-\wihb{}$^*$  & $106.40$ & $55.20$  & $39.00$ & $653.00$ \\
 & \abm{}$^*$  & $51.40$ & $28.70$  & $19.30$ & \bm{$261.00$} \\
 & \vca{}$^*$  & \bm{$49.80$} & \bm{$19.00$}  & \bm{$12.40$} & $1766.40$ \\
 \hline
  \rotatebox[origin=c]{90}{\eqref{eq:sparsity}} & \bpcg{}\avi{}-\wihb{}$^*$  & \bm{$0.67$} & \bm{$0.52$}& \bm{$0.03$} & \bm{$0.71$} \\
 \hline
\end{tabular}
\end{table*}
We compare the performance of \cg{}\avi{}-\ihb{}$^{*}$, \bpcg{}\avi{}-\wihb{}$^{*}$, \agd{}\avi{}-\ihb{}$^{*}$, \abm{}$^{*}$, \vca{}$^{*}$, and polynomial kernel \svm{} on the data sets credit, htru, skin, and spam.

\paragraph{Setup.}
We tune the hyperparameters on the training data using threefold cross-validation. We retrain on the entire training data set using the best combination of hyperparameters and evaluate the classification error on the test set and the hyperparameter optimization time. For the generator-constructing methods, we also compare $|\cG| + |\cO|$, where $|\cG| = \sum_{i} |\cG^i|$, $|\cO| = \sum_{i} |\cO|^i$,
and $(\cG^i, \cO^i)$ is the output of applying a generator-constructing algorithm to samples belonging to class $i$ and the \emph{sparsity of the feature transformation} $\cG= \bigcup_{i} \cG^i$, which is defined as
\begin{equation}\tag{SPAR}\label{eq:sparsity}
    \spar (\cG) = (\sum_{g\in \cG} g_z) / (\sum_{g\in \cG} g_e)\in[0, 1],
\end{equation}
where for a polynomial $g = \sum_{j = 1}^k c_j t_j + t$ with $\lt(g)=t$, $g_e = k$ and $g_z = |\{c_j = 0 \mid j \in\{1, \ldots, k\}\}|$, that is, $g_e$ and $g_z$ are the number of non-leading and the number of zero coefficient vector entries of $g$, respectively.
Results are averaged over ten random $60 \%$/$40 \%$ train/test partitions.

\paragraph{Results.}
The results are presented in Table~\ref{table:performance}. \oavi{}$^{*}$ admits excellent test-set classification accuracy. \bpcg\avi{}-\wihb{}$^{*}$, in particular, admits the best test-set classification error on all data sets but one.
Hyperparameter tuning for \oavi{}$^{*}$ is often slightly slower than for \abm{}$^{*}$ and \vca{}$^*$. Since \bpcg\avi{}-\wihb{}$^{*}$ does not employ \ihb{}, the approach is always slower than \cg{}\avi{}-\ihb{}$^{*}$ and \agd{}\avi{}-\ihb{}$^{*}$. For all data sets but credit and skin, the hyperparameter optimization time for the \svm{} is shorter than for the other approaches. On skin, since the training time of the polynomial kernel \svm{} is superlinear in $m$, the hyperparameter training for the polynomial kernel \svm{} is slower than for the other approaches.
For data sets with few features $n$, the magnitude of $|\cG| + |\cO|$ is often the smallest for \vca{}$^{*}$. However, for spam, a data set with $n = 57$, as already pointed out by \citet{kera2019spurious} as the spurious vanishing problem, we observe \vca{}'s tendency to create unnecessary generators. 
Finally, only \bpcg\avi{}-\wihb{}$^{*}$ constructs sparse feature transformations, potentially explaining the excellent test-set classification accuracy of the approach. 

In conclusion, the numerical results justify considering \oavi{} as the generator-constructing algorithm of choice, as it is the only approximate vanishing ideal algorithm with known learning guarantees and, in practice, performs better than or similar to related approaches.

\newpage

\subsubsection*{Acknowledgements}
We would like to thank Gabor Braun for providing us with the main arguments for the proof of the lower bound on the pyramidal width of the $\ell_1$-ball. This research was partially funded by the Deutsche Forschungsgemeinschaft (DFG, German Research Foundation) under Germany´s Excellence Strategy – The Berlin Mathematics Research Center MATH$^+$ (EXC-2046/1, project ID 390685689, BMS Stipend) and JST, ACT-X Grant Number JPMJAX200F, Japan.

\bibliographystyle{apalike}
\bibliography{bibliography}

\appendix

\newpage

\section{Blended pairwise conditional gradients algorithm (\bpcg{})}\label{sec:bpcg}

\begin{algorithm}[t]
\SetKwInOut{Input}{Input}\SetKwInOut{Output}{Output}
\SetKwComment{Comment}{$\triangleright$\ }{}
\Input{A smooth and convex function $f$, a starting vertex $\yy_0 \in \vertices(P)$.
}
\Output{A point $\yy_T\in P$.}
\hrulealg
{$S^{(0)}\gets \{ \yy_0\}$}\\
{$\lambda_{{\yy_0}}^{(0)} \gets 1$}\\
{$\lambda_{{\yy_0}}^{(0)} \gets 0$ for $\vv\in\vertices(P)\setminus\{\yy_0\}$}\\
\For{$t = 0,\ldots, T - 1$}{
{$\aa_t \in \argmax_{\vv\in S^{(t)}} \langle \nabla f(\yy_t), \vv\rangle$ \Comment*[f]{away vertex}}\\
{$\qq_t \in \argmin_{\vv\in S^{(t)}} \langle \nabla f(\yy_t), \vv\rangle$ \Comment*[f]{local FW vertex}}\\
{$\ww_t \in \argmin_{\vv\in \vertices(P)} \langle \nabla f(\yy_t), \vv\rangle$ \Comment*[f]{FW vertex}}\\
\uIf{$\langle \nabla f(\yy_t),\ww_t - \yy_t\rangle \geq  \langle \nabla f(\yy_t), \qq_t - \aa_t\rangle$}{
    {$\dd_t \gets \qq_t - \aa_t$}\\
    {$\gamma_t \in \argmin_{\gamma\in [0, \lambda_{\aa_t}^{(t)}]} f(\yy_t + \gamma \dd_t)$}\\
    {$\lambda_{\vv}^{(t + 1)} \gets \lambda_{\vv}^{(t)}$ for $\vv\in\vertices(P)\setminus\{\aa_t, \qq_t\}$}\\
    {$\lambda_{\aa_t}^{(t + 1)} \gets \lambda_{\aa_t}^{(t)} - \gamma_t$}\\
    {$\lambda_{\qq_t}^{(t + 1)} \gets \lambda_{\qq_t}^{(t)} + \gamma_t$}
    }
\Else{
{$\dd_t \gets \ww_t - \yy_t$}\\
{$\gamma_t \in \argmin_{\gamma \in [0, 1]} f(\yy_t + \gamma \dd_t)$}\\
{$\lambda_\vv^{(t+1)} \gets (1-\gamma_t)\lambda_\vv^{(t)}$ for $\vv\in \vertices(P)\setminus\{\ww_t\}$}\\
{$\lambda_{\ww_t}^{(t+1)}\gets  (1-\gamma_t)\lambda_{\ww_t}^{(t)} + \gamma_t$}\\
}
{$S^{(t+1)} \gets \{\vv\in \vertices(P) \mid \lambda_{\vv}^{(t+1)}>0\}$}\\
{$\yy_{t+1} \gets \yy_t + \gamma_t \dd_t$}
}
\caption{Blended pairwise conditional gradients algorithm (\bpcg{})}
 \label{algorithm:bpcg}
\end{algorithm}

The blended pairwise conditional gradients algorithm (\bpcg{}) \citep{tsuji2022pairwise} is presented in Algorithm~\ref{algorithm:bpcg}.

\section{Additional information on \ihb{}}

We discuss the computational complexity of \ihb{} in Appendix~\ref{sec:ihb_computational_complexity} and \wihb{} in Appendix~\ref{sec:wihb}.

\subsection{Computational complexity of \ihb{}}\label{sec:ihb_computational_complexity}

The main cost of \ihb{} is the inversion of the matrix $A^\intercal A \in \R^{\ell \times \ell}$, which generally requires $O(\ell^3)$ elementary operations. Since \oavi{} solves a series of quadratic convex optimization problems that differ from each other only slightly, we can maintain and update $(A^\intercal A)^{-1}$ using $O(\ell^2)$ instead of $O(\ell^3)$ elementary operations.

\begin{theorem}[\ihb{} update cost]\label{thm:inverse_hessian_boosting}
Let  $A\in \R^{m\times \ell}$, $A^\intercal A \in \R^{\ell \times \ell}$, $(A^\intercal A)^{-1}$, and $\bb\in \R^\ell$ be given. In case $\|\bb\|_2 > 0$ and $\bb^\intercal A( A^\intercal A)^{-1} A^\intercal \bb \neq \|\bb\|_2^2$,
\begin{align}\label{eq:ihb_updates}
    \tilde{A} = (A, \bb) \in \R^{m \times (\ell + 1)}, \qquad \tilde{A}^\intercal \tilde{A} \in \R^{(\ell +1) \times (\ell+1)}, \qquad (\tilde{A}^\intercal \tilde{A})^{-1}\in \R^{(\ell+1)\times (\ell +1)}
\end{align}
can be constructed in $O(\ell m + \ell^2)$ elementary operations.
\end{theorem}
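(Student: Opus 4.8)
The plan is to assemble $\tilde A^\intercal\tilde A$ in block form and invert it via the Schur complement, reusing the already-available matrices $M:=A^\intercal A$ and $M^{-1}=(A^\intercal A)^{-1}$ so that no single step costs more than $O(\ell m+\ell^2)$.

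First I would record that, with $\uu:=A^\intercal\bb\in\R^\ell$ and $s:=\bb^\intercal\bb=\|\bb\|_2^2$,
\[
    \tilde A^\intercal\tilde A=\begin{pmatrix} M & \uu \\ \uu^\intercal & s\end{pmatrix}.
\]
Appending the column $\bb$ to $A$ to form $\tilde A$ is $O(m)$; forming $\uu$ is one matrix–vector product costing $O(\ell m)$; and $s$ costs $O(m)$. Since $M$ is given, $\tilde A$ and $\tilde A^\intercal\tilde A$ are obtained in $O(\ell m)$ operations.

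Next I would invoke the block-matrix inversion formula. The Schur complement of $M$ in $\tilde A^\intercal\tilde A$ is the scalar
\[
    c:=s-\uu^\intercal M^{-1}\uu=\|\bb\|_2^2-\bb^\intercal A(A^\intercal A)^{-1}A^\intercal\bb,
\]
which is exactly the quantity assumed nonzero in the hypothesis. Writing $P_A:=A(A^\intercal A)^{-1}A^\intercal$ for the orthogonal projector onto $\operatorname{col}(A)$, we have $c=\|\bb-P_A\bb\|_2^2\ge 0$, so $c\neq 0$ is equivalent to $\bb\notin\operatorname{col}(A)$, i.e. to $\tilde A$ having full column rank; this is precisely what makes $\tilde A^\intercal\tilde A$ positive definite, hence invertible (and forces $\|\bb\|_2>0$). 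With $\vv:=M^{-1}\uu=(A^\intercal A)^{-1}A^\intercal\bb\in\R^\ell$, the block inverse is
\[
    (\tilde A^\intercal\tilde A)^{-1}=\begin{pmatrix} M^{-1}+\tfrac{1}{c}\vv\vv^\intercal & -\tfrac{1}{c}\vv \\ -\tfrac{1}{c}\vv^\intercal & \tfrac{1}{c}\end{pmatrix}.
\]

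Finally I would tally the cost of evaluating this expression: computing $\vv$ is one matrix–vector product against the given $M^{-1}$, costing $O(\ell^2)$; then $c=s-\uu^\intercal\vv$ costs $O(\ell)$; the rank-one correction $\tfrac1c\vv\vv^\intercal$ and its addition to the given $M^{-1}$ cost $O(\ell^2)$; and filling in the border row/column and corner costs $O(\ell)$. Together with the $O(\ell m)$ from the assembly step this gives the claimed $O(\ell m+\ell^2)$ bound. The only point needing genuine care — the ``main obstacle,'' such as it is — is the verification that the hypothesis $\bb^\intercal A(A^\intercal A)^{-1}A^\intercal\bb\neq\|\bb\|_2^2$ simultaneously guarantees invertibility of $\tilde A^\intercal\tilde A$ and coincides with nonvanishing of the Schur complement $c$, so that the displayed block-inverse identity is legitimate; the remainder is routine accounting of matrix–vector products and a rank-one update.
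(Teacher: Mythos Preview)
Your proof is correct and follows the same overall skeleton as the paper: write $\tilde A^\intercal\tilde A$ in $2\times 2$ block form, identify the blocks of its inverse, and tally costs. The difference is in the key identity used. You invoke the Schur-complement block-inverse formula directly, observing that the hypothesis is exactly $c=s-\uu^\intercal M^{-1}\uu\neq 0$; the paper instead writes out $\tilde B\tilde N=I$ blockwise, solves for the three unknown blocks in turn, and appeals to the Sherman--Morrison formula to invert $B-\tfrac{1}{\|\bb\|_2^2}A^\intercal\bb\bb^\intercal A$. The two computations produce the same blocks, but your route is shorter and avoids the Sherman--Morrison detour (and the separate use of the assumption $\|\bb\|_2>0$, which in your argument is subsumed by $c\neq 0$). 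Your additional remark that $c=\|\bb-P_A\bb\|_2^2$, so that the hypothesis is equivalent to $\bb\notin\operatorname{col}(A)$ and hence to $\tilde A$ having full column rank, is a nice geometric justification of invertibility that the paper does not spell out.
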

\begin{proof}
Let $B= A^\intercal A \in \R^{\ell \times \ell}$, $\tilde{B}= \tilde{A}^\intercal \tilde{A}\in \R^{(\ell +1) \times (\ell + 1)}$, $N = B^{-1} \in \R^{\ell \times \ell}$, and $\tilde{N} = \tilde{B}^{-1} = (\tilde{A}^\intercal \tilde{A})^{-1}\in \R^{(\ell +1) \times (\ell + 1)}$.
In $O(m \ell)$ elementary operations, we can construct $A^\intercal \bb \in \R^{\ell}$, $\bb^\intercal \bb = \|\bb\|_2^2 \in \R$, and $\tilde{A} = (A, \bb) \in \R^{m \times (\ell + 1)}$  and in additional $O(\ell^2)$ elementary operations, we can construct
    \begin{align*}
        \tilde{B} =  \tilde{A}^\intercal \tilde{A} = \begin{pmatrix}
                            B & A^\intercal \bb\\
                            \bb^\intercal A & \|\bb\|_2^2
                        \end{pmatrix}\in \R^{(\ell +1) \times (\ell+1)}.
    \end{align*}
We can then compute $\tilde{N} = \tilde{B}^{-1} = (\tilde{A}^\intercal \tilde{A})^{-1}  \in \R^{(\ell + 1) \times (\ell + 1)}$ in additional $O(\ell^2)$ elementary operations. We write
    \begin{align*}
        \tilde{N} = \begin{pmatrix}
            \tilde{N}_1 & \tilde{\nn}_2 \\
            \tilde{\nn}_2^\intercal & \tilde{n}_3
        \end{pmatrix}
        \in \R^{(\ell + 1) \times (\ell + 1)},
    \end{align*}
    where $\tilde{N}_1 \in \R^{\ell \times \ell}$, $\tilde{\nn}_2 \in \R^\ell$, and $\tilde{n}_3 \in \R$.
    Then, it has to hold that
    \begin{align*}
        \tilde{B} \tilde{N} = \begin{pmatrix}
                            B & A^\intercal \bb\\
                            \bb^\intercal A & \|\bb\|_2^2
                        \end{pmatrix}
                        \begin{pmatrix}
            \tilde{N}_1 & \tilde{\nn}_2 \\
            \tilde{\nn}_2^\intercal & \tilde{n}_3
        \end{pmatrix}
        = I  \in \R^{(\ell + 1) \times (\ell + 1)},
    \end{align*}
    where $I$ is the identity matrix.
    Note that $\bb^\intercal A \tilde{\nn}_2 + \|\bb\|_2^2 \tilde{n}_3  = 1$. Thus,
        \begin{align}\label{eq:tilde_n_3}
            \tilde{n}_3 = \frac{1 - \bb^\intercal A \tilde{\nn}_2}{\|\bb\|_2^2} \in \R,
        \end{align}
        which is well-defined due to the assumption $\|\bb\|_2 > 0$.
        Since $\bb^\intercal A$ is already computed, once $\tilde{\nn}_2$ is computed, the computation of $\tilde{n}_3$ requires only additional $O(\ell)$ elementary operations.
        Similarly, we have that
        \begin{align}\label{eq:start_tilde_n_2}
            B \tilde{\nn}_2 + A^\intercal \bb \tilde{n}_3 = \zeros \in \R^\ell.
        \end{align}
        Plugging \eqref{eq:tilde_n_3} into \eqref{eq:start_tilde_n_2}, we obtain
        $(B - \frac{A^\intercal \bb \bb^\intercal A}{\|\bb\|_2^2}) \tilde{\nn}_2 = -\frac{A^\intercal \bb}{\|\bb\|_2^2}$.
        Thus,
        \begin{align*}
            \tilde{\nn}_2 = - \left(B - \frac{A^\intercal \bb \bb^\intercal A}{\|\bb\|_2^2}\right) ^{-1}\frac{A^\intercal \bb}{\|\bb\|_2^2}.
        \end{align*}
        The existence of $(B - \frac{A^\intercal \bb \bb^\intercal A}{\|\bb\|_2^2}) ^{-1}$ follows from the Sherman-Morrison formula \citep{sherman1950adjustment,bartlett1951inverse} and the assumption that $\bb^\intercal A( A^\intercal A)^{-1} A^\intercal \bb \neq \|\bb\|_2^2$. Then, again using the Sherman-Morrison formula, 
        \begin{align*}
            \tilde{\nn}_2 = - \left(B^{-1} + \frac{B^{-1} A^\intercal \bb \bb^\intercal A B^{-1}}{\|\bb\|_2^2 - \bb^\intercal A B^{-1} A^\intercal \bb}\right)\frac{A^\intercal \bb}{\|\bb\|_2^2},
        \end{align*}
        which can be computed using additional $O(\ell^2)$ elementary operations.
        Finally, we construct $\tilde{N}_1$, which is determined by
        $B \tilde{N}_1 + A^\intercal \bb \tilde{\nn}_2^\intercal = I \in \R^{\ell \times \ell}$.
        Thus,
        $\tilde{N}_1 =  B^{-1}- B^{-1}A^\intercal \bb \tilde{\nn}_2^\intercal\in \R^{\ell \times \ell}$,
        which can be computed in additional $O(\ell^2)$ elementary operations since $B^{-1}A^\intercal \bb \in \R^\ell$ is already computed.
In summary, we require $O(\ell m + \ell^2)$ elementary operations. Note that even if we do not compute $(\tilde{A}^\intercal \tilde{A})^{-1}$, we still require $O(\ell m + \ell^2)$ elementary operations.
\end{proof}

In the remark below, we discuss the literature related to \ihb{}.
\begin{remark}[Work related to \ihb{}]\label{rem:work_related_to_ihb}
The proof of Theorem~\ref{thm:inverse_hessian_boosting} is similar to the proof that the inverse of the Hessian can be updated efficiently in the \emph{online Newton algorithm} \citep{hazan2007logarithmic}. Both proofs rely on the Sherman-Morrison formula \citep{sherman1950adjustment,bartlett1951inverse}.  However, in our setting, the updates occur column-wise instead of row-wise. Updating $(A^\intercal A)^{-1}$ column-wise for generator-constructing algorithms was already discussed in \citet{limbeck2013computation} using QR decompositions without addressing the numerical instability of inverse matrix updates.
\end{remark}

Next, we prove that Theorem~\ref{thm:inverse_hessian_boosting} implies the improved computational complexity of \cgavi{}-\ihb{} in Corollary~\ref{cor:computational_complexity_oavi_ihb}. 

\begin{proof}[Proof of Corollary~\ref{cor:computational_complexity_oavi_ihb}]
Suppose that \cgavi{}-\ihb{} is currently executing Lines~\ref{alg:oavi_oracle}--\ref{alg:oavi_end_else} of Algorithm~\ref{algorithm:oavi} for a particular term $u_i\in \partial_d\cO$ and recall the associated notations $\ell = |\cO|$, $A = \cO(X) \in \R^{m \times \ell}$, $\bb = u_i(X)$, $f(\yy) = \frac{1}{m}\|A\yy + \bb\|_2^2$, and $P=\{\yy\in \R^\ell \mid \|\yy\|_1 \leq \tau - 1\}$. Then, the optimization problem in Line~\ref{alg:oavi_oracle} of Algorithm~\ref{algorithm:oavi} takes the form
\begin{align*}
    \cc \in \argmin_{\yy\in P} f(\yy).
\end{align*}
We first prove that the violation of any of the two assumptions of Theorem~\ref{thm:inverse_hessian_boosting} implies the existence of a $(\psi, 1, \tau)$-approximately vanishing generator $g$. 
We prove this claim by treating the two assumptions separately:
\begin{enumerate}
    \item If $\|\bb\|_2 = 0$, it holds that $\mse(u_i, X) = 0$. By the assumption that $\tau \geq 2$ is large enough to guarantee that \eqref{eq:violation_of_assumptions} never holds, \cgavi{}-\ihb{} constructs a $(\psi, 1, \tau)$-approximately vanishing generator in Line~\ref{alg:oavi_g}. 
    \item If $\bb^\intercal A( A^\intercal A)^{-1} A^\intercal \bb = \|\bb\|_2^2$, it holds that $f(-(A^\intercal A)^{-1} A^\intercal \bb) = 0$. Thus, by the assumption that $\tau \geq 2$ is large enough to guarantee that \eqref{eq:violation_of_assumptions} never holds, \cgavi{}-\ihb{} constructs a $(\psi, 1, \tau)$-approximately vanishing generator in Line~\ref{alg:oavi_g}.
\end{enumerate}
Since an update of $(A^\intercal A)^{-1}$ is necessary only if Line~\ref{alg:oavi_add_O} is executed, that is, when there does not exist a $(\psi, 1, \tau)$-approximately vanishing generator $g$, we never have to update $(A^\intercal A)^{-1}$ when the assumptions of Theorem~\ref{thm:inverse_hessian_boosting} are violated. 
Thus, by Theorem~\ref{thm:inverse_hessian_boosting}, we can always update $A$, $A^\intercal A$, and $(A^\intercal A)^{-1}$ in $O(\ell m + \ell^2) \leq O((|\cG| + |\cO|)m + (|\cG| + |\cO|)^2)$ elementary operations using space $O(\ell m + \ell^2) \leq O((|\cG| + |\cO|)m + (|\cG| + |\cO|)^2)$. Then, since we run \cg{} for a constant number of iterations, the time and space complexities of Lines~\ref{alg:oavi_oracle}--\ref{alg:oavi_end_else} are $T_\cvxoracle{} = O((|\cG|+|\cO|)m + (|\cG|+|\cO|)^2)$ and $S_\cvxoracle{} = O((|\cG| + |\cO|)m + (|\cG| + |\cO|)^2)$, respectively.
\end{proof}

Note that the time and space complexities in Corollary~\ref{cor:computational_complexity_oavi_ihb} also hold for \agd{}\avi{}-\ihb{} with $\tau = \infty$.

\subsection{Weak inverse Hessian boosting (\wihb{})}\label{sec:wihb}

\begin{algorithm}[t]
\SetKwInOut{Input}{Input}\SetKwInOut{Output}{Output}
\SetKwComment{Comment}{$\triangleright$\ }{}
\Input{A data set $X = \{\xx_1, \ldots, \xx_m\} \subseteq \R^n$ and parameters $\psi \geq \epsilon \geq 0$ and $\tau \geq 2$.}
\Output{A set of polynomials $\cG\subseteq \cP$ and a set of monomials $\cO\subseteq \cT$.}
\hrulealg
{$d \gets 1$, $\cO=\{t_1\}_\sig \gets \{\oneterm\}_\sig$, $\cG\gets \emptyset$} \\
\While(\label{alg:bpcgavi_wihb_while_loop}){$\partial_d \cO = \{u_1, \ldots, u_k\}_\sig \neq \emptyset$}{
    \For(\label{alg:bpcgavi_wihb_for_loop}){$i = 1, \ldots, k$}{
        {$\ell \gets |\cO|\in\N$, $A \gets \cO(X)\in\R^{m\times \ell}$, $\bb \gets u_i(X)\in\R^m$, $P=\{\yy\in \R^\ell \mid \|\yy\|_1 \leq \tau - 1\}$\label{alg:bpcgavi_wihb_notation}\\}
        {$\yy_0\gets (A^\intercal A)^{-1}A^\intercal \bb \in \R^\ell$\label{alg:bpcgavi_wihb_y_0}\\}
        {Solve $\cc \in \argmin_{\yy\in P} \frac{1}{m}\|A \yy + \bb\|_2^2$ up to tolerance $\eps$ using \cg{} with starting vector $\yy_0$.  \label{alg:bpcgavi_wihb_oracle}}\\
        {$g \gets \sum_{j = 1}^{\ell} c_j t_j + u_i$\label{alg:bpcgavi_wihb_g}\Comment*[f]{a non-sparse polynomial}} \\
        \uIf (\label{alg:bpcgavi_wihb_evaluate_mse}\Comment*[f]{check whether $g$ vanishes}){$\mse(g, X) \leq \psi$}{
            {Solve $\dd \in \argmin_{\yy\in P} \frac{1}{m}\|A \yy + \bb\|_2^2$ up to tolerance $\eps$ using \bpcg{} and a vertex of $P$ as starting vector.\label{alg:bpcgavi_wihb_oracle_h}\\}
            {$h \gets \sum_{j = 1}^{\ell} d_j t_j + u_i$\label{alg:bpcgavi_wihb_h}\Comment*[f]{a sparse polynomial}\\}
            \uIf(\Comment*[f]{check whether $h$ vanishes}){$\mse(h, X) \leq \psi$}{
            {$\cG \gets \cG \cup \{h\}$ \label{alg:bpcgavi_wihb_add_G_h}}
            }
            \Else{
            {$\cG \gets \cG \cup \{g\}$ \label{alg:bpcgavi_wihb_add_G_g}}
            }
             \label{alg:bpcgavi_wihb_second_end_if}} 
        \Else {
        {$\cO = \{t_1, \ldots, t_{\ell + 1}\}_\sig \gets (\cO \cup \{u_i\})_{\sig}$ \label{alg:bpcgavi_wihb_add_O}}
        }\label{alg:bpcgavi_wihb_end_else}
    }\label{alg:bpcgavi_wihb_end_for_loop}
    {$ d \gets d + 1$} 
}
\caption{Blended pairwise conditional gradients approximate vanishing ideal algorithm with weak inverse Hessian boosting (\bpcg{}\avi{}-\wihb{})} \label{algorithm:bpcgavi_wihb}
\end{algorithm}

So far, we have introduced \ihb{} to drastically speed-up training of \agdavi{} and \cgavi{}. A drawback of using \cgavi{}-\ihb{} is that the initialization of \cg{} variants with a non-sparse initial vector such as $\yy_0 = (A^\intercal A)^{-1}A^\intercal \bb$ leads to the construction of generally non-sparse generators. In this section, we explain how to combine the speed-up of \ihb{} with the sparsity-inducing properties of \cg{} variants, referring to the resulting technique as weak inverse Hessian boosting (\wihb). Specifically, we present \wihb{} with \bpcg{}\avi{} (\oavi{} with solver \bpcg{}), referred to as \bpcg{}\avi{}-\wihb{} in Algorithm~\ref{algorithm:bpcgavi_wihb}.

The high-level idea of \bpcg{}\avi{}-\wihb{} is to use \ihb{} and vanilla \cg{} to quickly check whether a $(\psi, 1, \tau)$-approximately vanishing generator exists. If it does, we then use \bpcg{} to try and construct a $(\psi, 1, \tau)$-approximately vanishing generator that is also sparse. 

We proceed by giving a detailed overview of \bpcg{}\avi{}-\wihb{}.
In Line~\ref{alg:bpcgavi_wihb_y_0} of \bpcg{}\avi{}-\wihb{}, we construct $\yy_0$.
Then, in Line~\ref{alg:bpcgavi_wihb_oracle}, we solve $\cc\in\argmin_{\yy\in P} \frac{1}{m}\|A\yy + \bb\|_2^2$
up to tolerance $\eps$ using \cg{} with starting vector $\yy_0 = (A^\intercal A)^{-1}A^\intercal \bb$.
Since $\yy_0$ is a well-educated guess for the solution to the constrained optimization problem $ \argmin_{\yy\in P} \frac{1}{m}\|A\yy + \bb\|_2^2$ in Line~\ref{alg:bpcgavi_wihb_oracle}, \cg{} often runs for only very few iterations.
The drawback of using the non-sparse $\yy_0$ as a starting vector is that $\cc$ constructed in Line~\ref{alg:bpcgavi_wihb_oracle} and $g$ constructed in Line~\ref{alg:bpcgavi_wihb_g} are generally non-sparse. We alleviate the issue of non-sparsity of $g$ in Lines~\ref{alg:bpcgavi_wihb_evaluate_mse}--\ref{alg:bpcgavi_wihb_end_else}.
In Line~\ref{alg:bpcgavi_wihb_evaluate_mse}, we first check whether $g$ is a $(\psi, 1, \tau)$-approximately vanishing generator of $X$. If $g$ does not vanish approximately, we know that there does not exist an approximately vanishing generator with leading term $u_i$ and we append $u_i$ to $\cO$ in Line~\ref{alg:bpcgavi_wihb_add_O}. If, however, $\mse(g, X) \leq \psi$, we solve the constrained convex optimization problem in Line~\ref{alg:bpcgavi_wihb_oracle} again in Line~\ref{alg:bpcgavi_wihb_oracle_h} up to tolerance $\eps$ using \bpcg{} and a vertex of the $\ell_1$-ball $P$ as starting vector. This has two consequences:
\begin{enumerate}
    \item The vector $\dd$ constructed in Line~\ref{alg:bpcgavi_wihb_oracle_h} tends to be sparse, as corroborated by the results in Table~\ref{table:performance}.
    \item The execution of Line~\ref{alg:bpcgavi_wihb_oracle_h} tends to take longer than the execution of Line~\ref{alg:bpcgavi_wihb_oracle} since \bpcg{}'s starting vector is not necessarily close in Euclidean distance to the optimal solution.
\end{enumerate}
Then, in Line~\ref{alg:bpcgavi_wihb_h}, we construct the polynomial $h$, which tends to be sparse. If $h$ is a $(\psi, 1, \tau)$-approximately vanishing generator, we append the sparse $h$ to $\cG$ in Line~\ref{alg:bpcgavi_wihb_add_G_h}. If it happens that $\mse(g,X)\leq \psi < \mse(h, X)$, we append the non-sparse $g$ to $\cG$ in Line~\ref{alg:bpcgavi_wihb_add_G_g}.
Following the discussion of Section~\ref{sec:inverse_hessian_boosting}, should \eqref{eq:violation_of_assumptions} ever hold, that is, $\|\yy_0\|_1 > \tau - 1$, we can no longer proceed with \bpcg{}\avi{}-\wihb{} as we can no longer guarantee that the inverse of $A^\intercal A$ exists. In that case, we proceed with vanilla \bpcg{}\avi{} for all terms remaining in the current and upcoming borders.

The discussion above illustrates that \bpcg{}\avi{}-\wihb{} solves $\argmin_{\yy\in P} \frac{1}{m}\|A\yy + \bb\|_2^2$ with \bpcg{} $|\cG|$ times as opposed to \bpcg{}\avi{}, which solves $\argmin_{\yy\in P} \frac{1}{m}\|A\yy + \bb\|_2^2$ with \bpcg{} $|\cG| + |\cO| - 1$ times. 
Thus, \bpcg{}\avi{}-\wihb{} combines the sparsity-inducing properties of \bpcg{} with the speed-up of \ihb{} without any of \ihb{}'s drawbacks.
The speed-up of \bpcg{}\avi{}-\wihb{} compared to \bpcg{}\avi{} is evident from the numerical experiments in Figure~\ref{fig:ihb} and the results of Table~\ref{table:performance} indicate that the sparsity-inducing properties of \bpcg{} are successfully exploited.

\section{Pyramidal width of the $\ell_1$-ball}\label{sec:pyramidal_width_l1_ball}

\citet{pena2019polytope} showed that the pyramidal width $\omega$ of the $\ell_1$-ball of radius $\tau$, that is, $P = \{\xx\in\R^\ell \mid \|\xx\|_1 \leq \tau\} \subseteq \R^\ell$ is given by
\begin{align*}
    \omega & = \min_{F\in \faces{}(P), \emptyset \subsetneq F \subsetneq P} \dist{}(F, \conv(\vertices(P) \setminus F)),
\end{align*}
where, for a polytope $P\subseteq \R^\ell$, $\faces{}(P)$ denotes the set of faces of $P$ and $\vertices(P)$ denotes the set of vertices of $P$, for a set $F\subseteq \R^\ell$, $\conv(F)$ is the convex hull of $F$, and for two disjoint sets $F, G \subseteq \R^\ell$, $\dist{}(F, G)$ is the Euclidean distance between $F$ and $G$.
\begin{lemma}[Pyramidal width of the $\ell_1$-ball]\label{lemma:pyramidal_width_l1_ball}
The pyramidal width of the $\ell_1$-ball of radius $\tau$, that is, $P = \{\xx\in\R^\ell \mid \|\xx\|_1 \leq \tau\} \subseteq \R^\ell$, is lower bounded by $\omega \geq \frac{\tau}{\sqrt{\ell-1}}$.
\end{lemma}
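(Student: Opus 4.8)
The plan is to apply the Peña–Rodríguez characterization of the pyramidal width recalled just above the statement, and to bound, uniformly over every proper nonempty face $F$ of $P$, the quantity $\dist(F,\conv(\vertices(P)\setminus F))$ from below by $\tau/\sqrt{\ell-1}$. Since the Euclidean distance scales linearly under dilation of $P$, it suffices to handle the unit $\ell_1$-ball ($\tau=1$) and prove the bound $1/\sqrt{\ell-1}$; the general case follows by multiplying through by $\tau$. (I will implicitly assume $\ell\ge 2$, as the claimed bound is only meaningful then.)

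First I would recall the combinatorial structure of the cross-polytope $P=\{x:\|x\|_1\le 1\}$: its vertices are $\pm e_1,\dots,\pm e_\ell$, and via its support function $x\mapsto\|x\|_\infty$ one checks that its proper nonempty faces are exactly the simplices $F_{I,\epsilon}=\conv\{\epsilon_i e_i : i\in I\}$ indexed by a nonempty $I\subseteq\{1,\dots,\ell\}$ and a sign vector $\epsilon\in\{\pm1\}^I$, with $\vertices(P)\cap F_{I,\epsilon}=\{\epsilon_i e_i: i\in I\}$. Hence $\vertices(P)\setminus F_{I,\epsilon}=\{-\epsilon_i e_i: i\in I\}\cup\{\pm e_j: j\notin I\}$. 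For a point $x\in F_{I,\epsilon}$ one has $x_j=0$ for $j\notin I$, $\epsilon_i x_i\ge 0$ for $i\in I$, and $\sum_{i\in I}\epsilon_i x_i=1$. I would then split on $k:=|I|$.

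Case $k\le\ell-1$: every $v\in\vertices(P)\setminus F_{I,\epsilon}$ satisfies $\epsilon_i v_i\le 0$ for all $i\in I$ (a vertex $\pm e_j$ with $j\notin I$ vanishes on $I$, while $-\epsilon_j e_j$ contributes $\epsilon_j v_j=-1$), so any $y\in\conv(\vertices(P)\setminus F_{I,\epsilon})$ also has $\epsilon_i y_i\le 0$ on $I$. Using $\epsilon_i x_i\ge 0\ge\epsilon_i y_i$ and then Cauchy–Schwarz,
\[
\|x-y\|_2^2\ \ge\ \sum_{i\in I}(\epsilon_i x_i-\epsilon_i y_i)^2\ \ge\ \sum_{i\in I}(\epsilon_i x_i)^2\ \ge\ \frac1k\Big(\sum_{i\in I}\epsilon_i x_i\Big)^2\ =\ \frac1k\ \ge\ \frac{1}{\ell-1}.
\]
Case $k=\ell$ (a facet): $F_{I,\epsilon}$ lies in the hyperplane $\{\sum_i\epsilon_i x_i=1\}$ and $\conv(\vertices(P)\setminus F_{I,\epsilon})=\conv\{-\epsilon_i e_i\}$ lies in the parallel hyperplane $\{\sum_i\epsilon_i x_i=-1\}$, so by Cauchy–Schwarz $\|x-y\|_2^2\ge\frac1\ell\big(\sum_i\epsilon_i x_i-\sum_i\epsilon_i y_i\big)^2=\frac4\ell\ge\frac1{\ell-1}$, the last inequality being equivalent to $4(\ell-1)\ge\ell$, i.e. $\ell\ge 4/3$. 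Taking the minimum over all faces and rescaling by $\tau$ gives $\omega\ge\tau/\sqrt{\ell-1}$.

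Every step is elementary, so there is no serious obstacle; the one point to be careful about is that the naive estimate $\|x-y\|_2^2\ge 1/k$ degrades to $1/\ell$ when $k=\ell$, which is \emph{not} at least $1/(\ell-1)$, so the facet case genuinely requires the sharper two-hyperplane bound $4/\ell$ (valid precisely because $\ell\ge 2$). The only other thing that must be pinned down cleanly is the face enumeration of the cross-polytope and the consequent description of $\vertices(P)\setminus F$.
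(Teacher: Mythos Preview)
Your proof is correct and follows essentially the same strategy as the paper: reduce by symmetry to a canonical face $F=\conv\{e_1,\dots,e_k\}$, use the sign structure of the complementary vertices to bound $\|x-y\|_2^2\ge\sum_{i\in I}(\epsilon_i x_i)^2$, and finish with Cauchy--Schwarz. You are in fact more careful than the paper: the paper restricts, without justification, to $k\in\{1,\dots,\ell-1\}$ and never treats the facet case $k=\ell$, whereas you correctly observe that the naive bound degrades to $1/\ell$ there and supply the sharper two-hyperplane estimate $4/\ell\ge 1/(\ell-1)$ to close the gap.
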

\begin{proof}
Let $\ee^{(i)}\in \R^\ell$ denote the $i$th unit vector.
Note that a non-trivial face of $P = \conv\{\pm \tau \ee^{(1)}, \ldots, \pm \tau \ee^{(n)}\} \subseteq \R^\ell$ that is not $P$ itself cannot contain both $\tau\ee^{(i)}$ and $-\tau\ee^{(i)}$ for any $i \in \{1, \ldots, \ell\}$. Thus, due to symmetry, we can assume that any non-trivial face of $P$ that is not $P$ itself is of the form
$
    F = \conv(\{\tau\ee^{(1)}, \ldots, \tau\ee^{(k)}\})
$
for some $k \in \{1, \ldots, \ell - 1\}$. Then, 
\begin{align*}
    G:= \conv (\vertices(P) \setminus F) = \conv (\{-\tau\ee^{(1)}, \ldots, - \tau\ee^{(k)}, \pm \tau\ee^{({k+1})}, \ldots, \pm \tau\ee^{(\ell)}\}).
\end{align*}
We have that 
\begin{align*}
   \dist{}(F, G) & = \min_{\uu \in F, \vv \in G} \|u - v\|_2 \\
   & = \min_{\uu \in F, \vv \in G} \sqrt{\sum_{i = 1}^k (u_i - v_i)^2 + \sum_{j = k+1}^\ell v_j^2} \\
   & \geq \min_{\uu \in F, \vv \in G} \sqrt{\sum_{i = 1}^k (u_i - v_i)^2} & \text{$\triangleright$ since $v_j^2 \geq 0$ for all $j\in \{k+1,\ldots, \ell\}$}\\
   & \geq \min_{\uu \in F} \sqrt{\sum_{i=1}^k u_i^2} & \text{$\triangleright$ since $v_i \leq 0 \leq u_i$ for all $i \in \{1, \ldots, k\}$}\\
   & = \sqrt{\frac{\tau^2}{k}}.
\end{align*}
Since $\sqrt{\frac{\tau^2}{k}}$ is minimized for $k\in\{1,\ldots, \ell-1\}$ as large as possible, it follows that $\omega \geq \frac{\tau}{\sqrt{\ell-1}}$. 
\end{proof}

\section{The synthetic data set}\label{sec:synthetic_data_set}
The synthetic data set consists of three features and contains two different classes. 
The samples $\xx$ that belong to the first class are generated such that they satisfy the equation
\begin{align*}
    x_1^2 + 0.01 x_2 + x_3^2 - 1 = 0
\end{align*}
and samples that belong to the second class are generated such that they satisfy the equation
\begin{align*}
    x_1^2 + x_3^2 - 1.3 = 0.
\end{align*}
The samples are perturbed with additive Gaussian noise with mean $0$ and standard deviation $0.05$.
\end{document}